\documentclass[lettersize,journal]{IEEEtran}

\usepackage{amsmath,amssymb,amsfonts,amsthm}
\usepackage{algorithmic}
\usepackage{algorithm}
\usepackage{array}
\usepackage{textcomp}
\usepackage{stfloats}
\usepackage{url}
\usepackage{verbatim}
\usepackage{graphicx}
\usepackage{booktabs}
\usepackage{multicol,multirow}
\usepackage{tabularx}
\usepackage{subfigure}
\usepackage[table,xcdraw]{xcolor}
\usepackage{float}
\usepackage[numbers,sort&compress]{natbib}

\newcommand{\eg}{\textit{e}.\textit{g}., }

\newcommand{\Rmnum}[1]{\uppercase\expandafter{\romannumeral #1}}
\newcommand{\hhs}{\text{H}_{2}\text{S}}
\newcommand{\soo}{\text{SO}_{2}}
\newcommand{\rr}{\text{R}^{2}}

\newtheorem{thm}{Theorem}[section]

\newtheorem{lem}{Lemma}[section]

\hyphenation{op-tical net-works semi-conduc-tor IEEE-Xplore}

\begin{document}

\title{TMoE-P: Towards the Pareto Optimum \\for Multivariate Soft Sensors}

\author{Licheng Pan, ~\IEEEmembership{Student Member,~IEEE}, Hao Wang, ~\IEEEmembership{Student Member,~IEEE}, Zhichao Chen, Yuxing Huang, Xinggao Liu 
\thanks{The authors are with the State Key Laboratory of Industrial 
Control Technology, College of Control Science and Engineering, Zhejiang
University, Hangzhou 310027, China (e-mail: 22132045@zju.edu.cn;
haohaow@zju.edu.cn; 12032042@zju.edu.cn; 22132047@zju.edu.cn; lxg@zju.edu.cn).}}


\maketitle

\begin{abstract}
Multi-variate soft sensor seeks accurate estimation of multiple quality variables using measurable process variables, which have emerged as a key factor in improving the quality of industrial manufacturing.
The current progress stays in some direct applications of multitask network architectures; however, there are two fundamental issues remain yet to be investigated with these approaches: (1) negative transfer, where sharing representations despite the difference of discriminate representations for different objectives degrades performance; (2) seesaw phenomenon, where the optimizer focuses on one dominant yet simple objective at the expense of others.
In this study, we reformulate the multi-variate soft sensor to a multi-objective problem, to address both issues and advance state-of-the-art performance.
To handle the negative transfer issue, we first propose an Objective-aware Mixture-of-Experts (OMoE) module, utilizing objective-specific and objective-shared experts for parameter sharing while maintaining the distinction between objectives.
To address the seesaw phenomenon, we then propose a Pareto Objective Routing (POR) module, adjusting the weights of learning objectives dynamically to achieve the Pareto optimum, with solid theoretical supports. 
We further present a Task-aware Mixture-of-Experts framework for achieving the Pareto optimum (TMoE-P) in multi-variate soft sensor, which consists of a stacked OMoE module and a POR module. 
We illustrate the efficacy of TMoE-P with an open soft sensor benchmark, where TMoE-P effectively alleviates the negative transfer and seesaw issues and outperforms the baseline models.
\end{abstract}

\begin{IEEEkeywords}
Soft Sensor, Multi-objective Optimization, Negative Transfer, Seesaw.
\end{IEEEkeywords}

\section{Introduction}
\label{sec:introduction}

\IEEEPARstart{P}{rocess} monitoring plays a significant role in contemporary industry and is directly associated to the manufacture of critical industrial products, \eg oil, gas, rare metals, iron, and steel, which are integral to modern human life and national economies. 
Monitoring the dynamics of critical quality variables in manufacturing has become one of the main concerns in order to meet urgent and demanding requirements, including increasing yields, reducing material consumption, protecting the environment and ensuring the safety of manufacturing processes.
The main challenge with monitoring is that some quality variables are extremely difficult to measure.
For instance, in deep water gas-lift oil well process, down-hole pressure is an extremely useful indicator for assessing the manufacture quality~\citep{gas-lift}, but it is difficult to measure with hardware sensors (\eg permanent down-hole pressure gauges) due to the inability of hardware sensors in high pressure and salinity environments~\citep{gauges}.

Soft sensors aims to estimate immeasurable quality variables with measurable process variables, which can be categorized as model-driven and data-driven approaches. 
With the success of machine learning and database technology, data-driven approaches have been predominant for building effective soft sensors, which involves building statistical estimates of quality variables with process variables.
At the very beginning, the data-driven soft sensors were implemented with linear statistical approaches, such as principal component regression~\citep{pcr1,pcr2}, partial least squares~\citep{pls} and gaussian process~\citep{gaussian}.
To depict nonlinear relationships between variables, nonlinear models in the machine learning community were further involved in soft sensors, represented by the support vector regression~\citep{svr1,svr2}, decision tree~\citep{decision-tree} and kernel regression~\citep{kernel-reg}.
More recently, with the great advancement of deep learning techniques~\citep{wangescm,fanlearnable}, soft sensors have been dominated by deep neural methods. Representative methods can be roughly categorized into several types: autoencoder networks~\citep{ae1,ae2,ae3}, recurrent neural networks~\citep{rnn1,rnn2,rnn3}, convolution neural networks~\citep{cnn1,cnn2}, graph neural networks~\citep{gnn1,gnn2}, and self-attentive networks~\citep{self-att1,self-att2}, with each type of methods its own strengths and weaknesses. For example, Autoencoder-based sensors can be refined with semi-supervised setting, but they struggle to capture long-term sequence patterns; RNN-based sensors can be incrementally updated for real-time monitoring, but they suffer from sub-optimal accuracy and single-threaded computational paradigm; self-attention based sensors can process the whole sequence in a parallel manner, but they suffer from huge computational cost and overfitting risk. 

Overall, the aforementioned line of research focuses on developing more efficient and effective architectures with the aim of improving the estimation accuracy of \emph{single} quality variable.
Despite their success, in real-world practice there is more than one quality variable to be estimated, which requires the construction of an effective multi-variate soft sensor (MVSS). 
For example, the product concentration and reactant concentration in the reactive distillation process must be measured concurrently to track the separation energy consumption and product purity.

Different from Multi-task Learning (MTL), Multi-objective Optimization (MOO) is a technology that can coordinate multiple objectives with internal conflicts and connections to obtain Pareto optimal solutions, mainly through the sharing mechanism. Generally, parameter sharing mechanism can be separated into two categories: hard parameter sharing and soft parameter sharing. Hard parameter sharing refers to the uniform sharing of the network's underlying parameters while maintaining the network's top-level parameters' independence for objectives, such as UberNet~\citep{ubernet}, multilinear relationship networks~\citep{mrn}, and stochastic filter groups~\citep{sfg}. However, in circumstances when the connection between objectives is poor, the direct sharing of underlying parameters will result in performance decrease, which is known as NT phenomenon. Soft parameter sharing introduces distinctive underlying shared expert parameters to each objective, such as cross-stitch networks~\citep{cross-stitch}, sluice networks~\citep{sluice}, and MMoE~\citep{mmoe,bmoe} The involvement of experts helps mitigate the NT issue, but because the variations and interactions among experts are neglected, it is challenging for multiple objectives to reach the same performance of a single objective at once, leading to the seesaw phenomenon.

This paper proposes a Task-aware Mixture-of-Experts framework for achieving the Pareto optimum (TMoE-P) to solve issues of NT and seesaw, which were brought on by disregarding the commonality and variability across experts in parameter sharing. The model is made up of two modules: Objective-aware Mixture-of-Experts (OMoE) and Pareto Objective Routing (POR). To start, the OMoE module explicitly distinguishes between objective-specific and objective-shared experts, reducing potentially damaging interactions among representations. Secondly, the POR module solves an optimization problem concerning the gradient of shared parameters using the gradient of each objective, and then obtains the Pareto optimal model parameters, which alleviates some objectives' performance sacrifice throughout the model training process.

The main contributions of this paper are summarized as follows:

\begin{enumerate}
\item The soft sensor problem is stated from the standpoint of MOO, and it is highlighted that the regression problem of multiple quality variables is transformed into the joint optimization of multiple regression objectives. A MOO model will be used to model the relationships between and within sequences of quality variables.
\item A TMoE-P model based on the OMoE module and POR module is proposed to solve the NT and seesaw issues of multi-objective soft sensor problem. The solutions for the two modules are explicit objective-specific with objective-shared representations learning, and Pareto optimal MOO.
\item Numerous off-line experiments were conducted in the Sulfur Recovery Unit process to assess the efficiency of TMoE-P in resolving the issues of NT and seesaw. The results and mathematical proofs reveal that TMoE-P outperforms the baseline models in soft sensor applications.
\end{enumerate}

The rest of this paper is organized as follows: Section \Rmnum{2} covers the preliminaries of multi-objective soft sensor problem and Pareto theory. Section \Rmnum{3} introduced the novel TMoE-P approach. Section \Rmnum{4} is an experimental study on the well-known Sulfur Recovery Unit. The final section is the conclusion.
\section{Preliminaries}
\label{sec:preliminaries}
\subsection{Multi-Objective Soft Sensor Problem}
Given a dataset with i.i.d. data points $\left\{ \boldsymbol{x}_{i}, \boldsymbol{y}_{i} \right\}_{i \in \{1,2,\ldots,N\}}$ where $\boldsymbol{x}_{i}=\left[ x_{i, 1}, x_{i, 2}, \ldots, x_{i, D}\right]^{\top}$ is the $D$-dimensional process variables, $\boldsymbol{y}_{i}=\left[ y_{i, 1}, y_{i, 2}, \ldots, y_{i, K}\right]^{\top}$ is the $K$-dimensional quality variables, $y_{i, k}$ is the label of the $k^{\text{th}}$ objective, $K$ is the number of objectives, and $N$ is the total number of data points. Multi-objective soft sensor (MOSS) seeks to establish an inference mathematical model $f \left( \boldsymbol{x} | \boldsymbol{\theta} \right)$ from process variables to quality variables through sharing mechanism. When a new process variables sequence $\boldsymbol{x}_{N+m} |_{m \geq 1}$ arrives, the quality variables sequence $\hat{\boldsymbol{y}}_{N+m}$ can be predicted as follows:
\begin{equation}\label{eq:pred}
    \hat{\boldsymbol{y}}_{N+m} = f(\boldsymbol{x}_{N+m} | \boldsymbol{\theta}) \cong \left[ y_{N+m, 1}, \ldots, y_{N+m, K}\right]^{\top}.
\end{equation}

Once we have set the associated loss function for each objective as $\mathcal{L}_{k}$, the mathematical form of MOSS is given as:
\begin{equation}\label{eq:multi-loss}
\begin{aligned}
    &\min_{\boldsymbol{\theta}_{\text{sh}}, \boldsymbol{\theta}_{1}, \ldots, \boldsymbol{\theta}_{K}} \mathcal{L}\left(\boldsymbol{\theta}_{\text{sh}}, \boldsymbol{\theta}_{1}, \ldots, \boldsymbol{\theta}_{K}\right) \\
    =&\min_{\boldsymbol{\theta}_{\text{sh}}, \boldsymbol{\theta}_{1}, \ldots, \boldsymbol{\theta}_{K}}\left[\hat{\mathcal{L}}_{1}\left(\boldsymbol{\theta}_{\text{sh}}, \boldsymbol{\theta}_{1}\right), \ldots, \hat{\mathcal{L}}_{L}\left(\boldsymbol{\theta}_{\text{sh}}, \boldsymbol{\theta}_{K}\right)\right]^{\top}.
\end{aligned}
\end{equation}
where $\boldsymbol{\theta}_{\text{sh}}$ are shared parameters between objectives, $\boldsymbol{\theta}_{k}$ are objective-specific, $\hat{\mathcal{L}}_{k}\left(\boldsymbol{\theta}_{\text{sh}}, \boldsymbol{\theta}_{k}\right)$ is the empirical loss of $k^{\text{th}}$ objective on the dataset, defined as $\hat{\mathcal{L}}_{k}\left(\boldsymbol{\theta}_{\text{sh}}, \boldsymbol{\theta}_{k}\right)=\frac{1}{N} \sum_{i=1}^{N} \mathcal{L}_{k}\left(f_{k}\left(\boldsymbol{x}_{i}|\boldsymbol{\theta}_{\text{sh}}, \boldsymbol{\theta}_{k}\right), y_{i, k}\right)$ and $f_{k}$ is the inference submodel in $f$ corresponding to $k^{\text{th}}$ objective.

\begin{figure}[!t]
    \centering
    \subfigure[Hard Sharing]{\includegraphics[width=0.32\columnwidth]{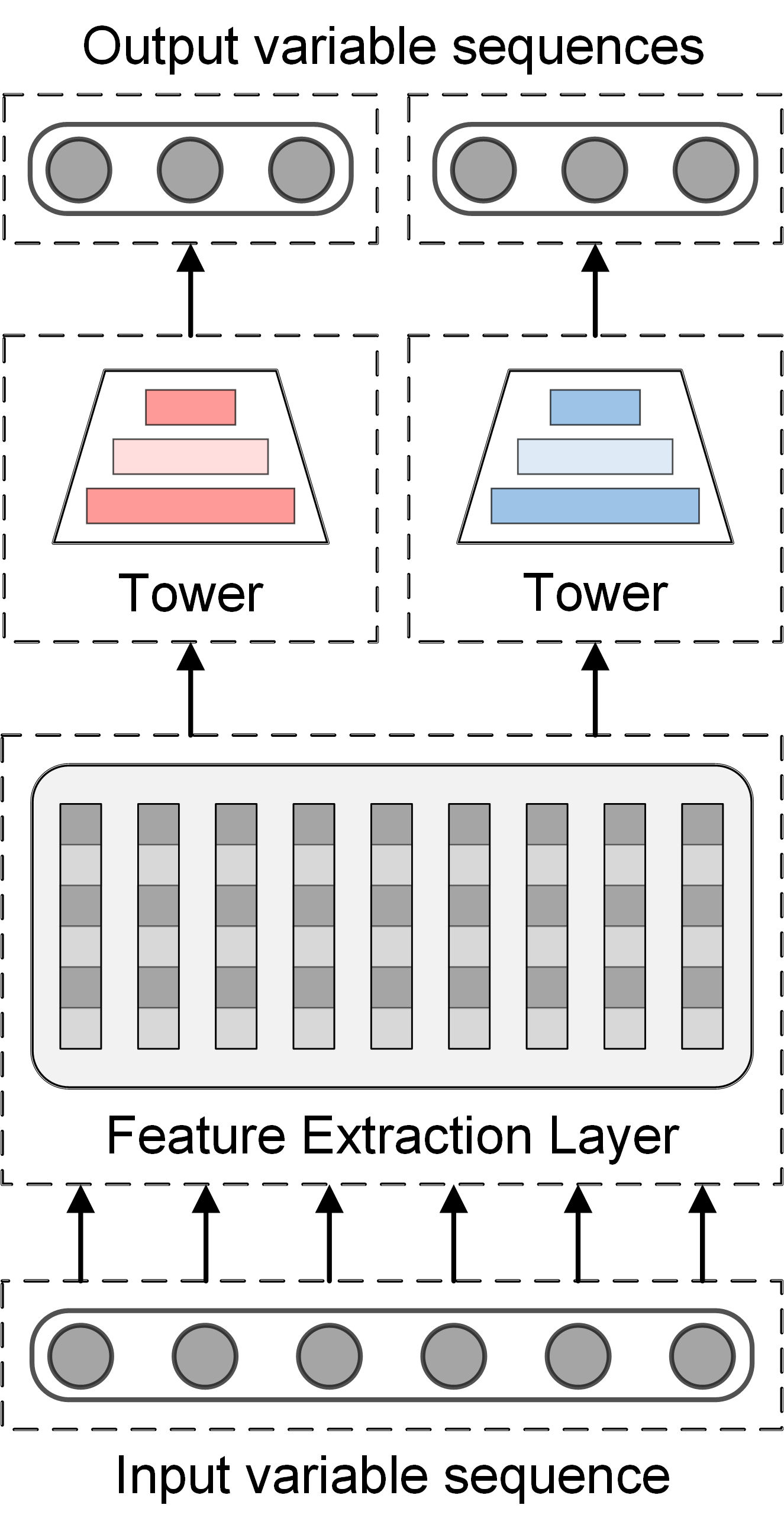}\label{fig:hard-sharing}}
    \subfigure[Cross-Talk]{\includegraphics[width=0.32\columnwidth]{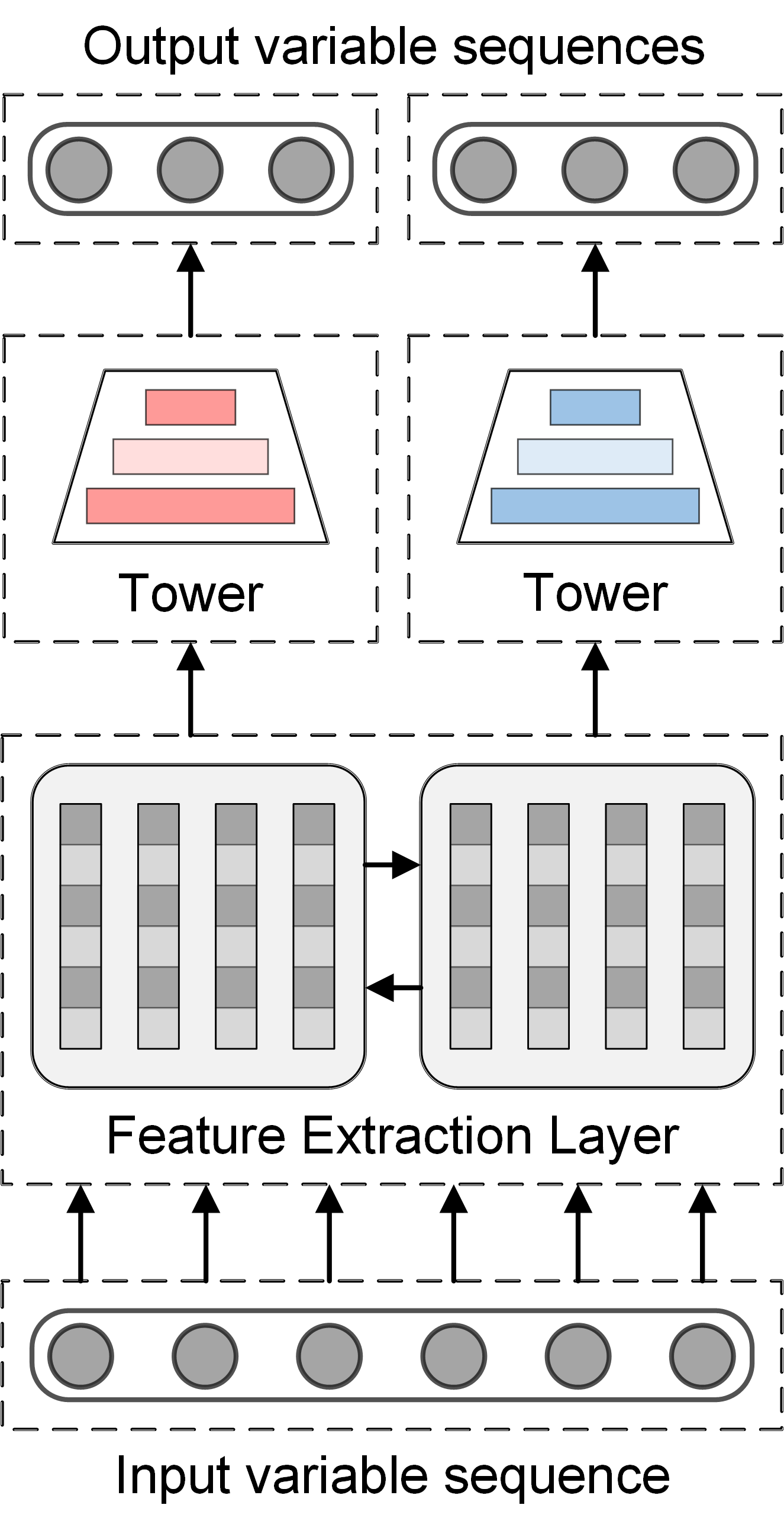}\label{fig:cross-talk}}
    \subfigure[Soft Sharing]{\includegraphics[width=0.32\columnwidth]{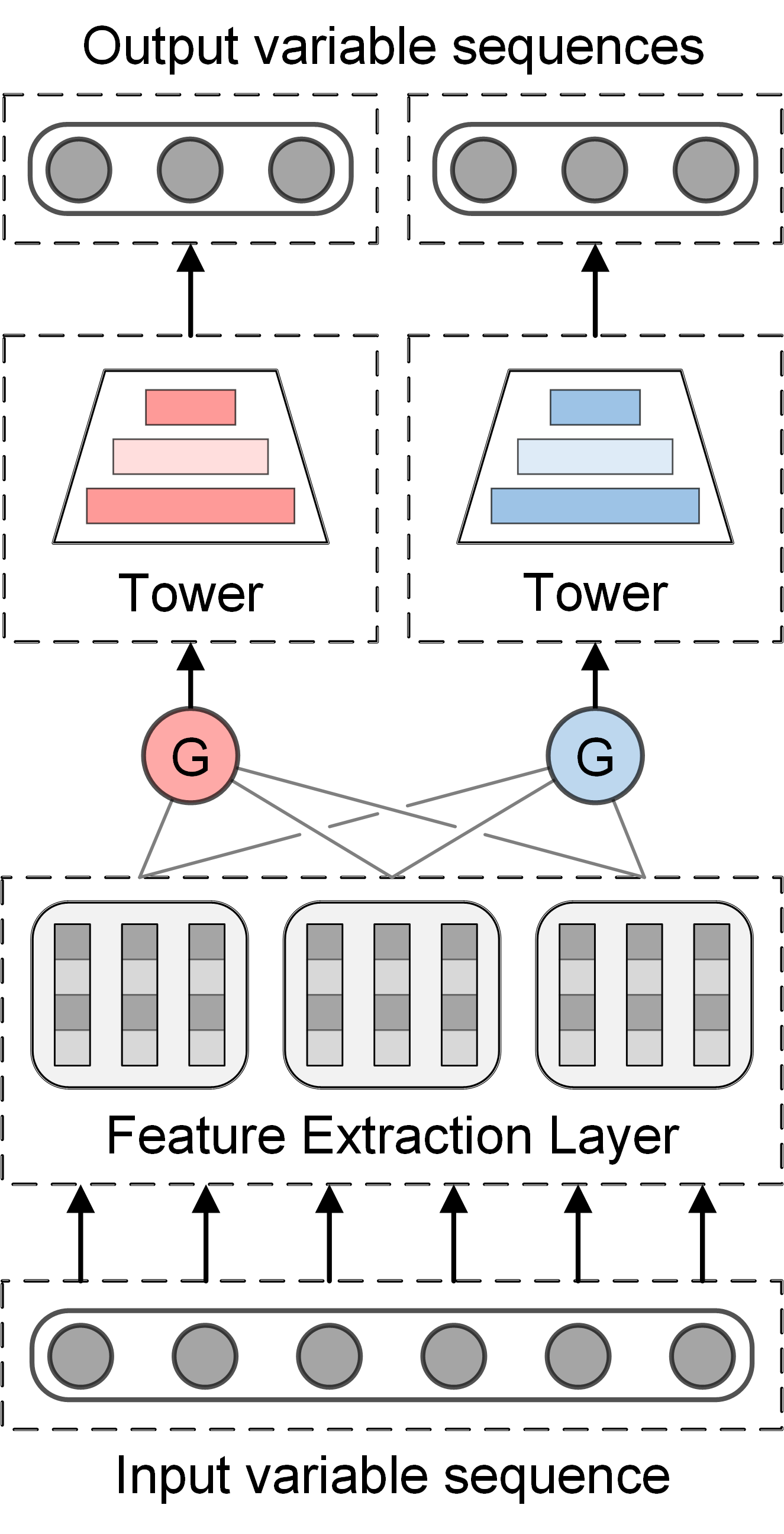}\label{fig:soft-sharing}}
    \caption{Network routing for hard and soft parameter sharing.}
    \vspace{-0.2cm}
\end{figure}

By leveraging both common features shared between objectives and objective-specific features, MOSS is capable of improving efficiency and generalization performance. There are two types of parameter sharing mechanisms that are frequently utilized nowadays. One is hard parameter sharing shown in Fig.~\ref{fig:hard-sharing}, which involves a shared underlying model structure with common hidden layers across objectives. Although this structure minimizes the likelihood of overfitting, optimization conflicts resulting from objective heterogeneity still exists. The other type is soft parameter sharing, which shares experts capable of cross-talk at the bottom layer or fuses experts information via gating networks, as illustrated in Fig.~\ref{fig:cross-talk} and~\ref{fig:soft-sharing}. Compared with hard parameter sharing, soft parameter sharing has more objective-specific parameters and can nevertheless achieve higher performance when objective dependencies are complicated and parameter conflicts arise.

\subsection{Negative Transfer \& Seesaw Phenomenon In MOSS}
NT and seesaw phenomenons are two major issues that must be addressed throughout the MOO process of multiple quality variables in MOSS. Specially, NT phenomenon means that the multi-objective performance attained through training via~\eqref{eq:multi-loss} has deteriorated in comparison to the performance of individual modeling training for each objective. The typical explanation is that there is a weak or loose connection between the multiple objectives of joint modeling or that the MOO model is difficult to learn the similarities and differences between the various objectives.

The seesaw phenomenon usually occurs when the association between multiple objectives is complex. It implies that multiple objectives cannot be performed at a high level at the same time, and that secondary objectives are frequently sacrificed to achieve excellent performance on the main objective.

\subsection{Pareto Optimality For MOSS}
\begin{figure*}[!th]
\vspace{-0.2cm}
\centering
\includegraphics[width=0.9\linewidth]{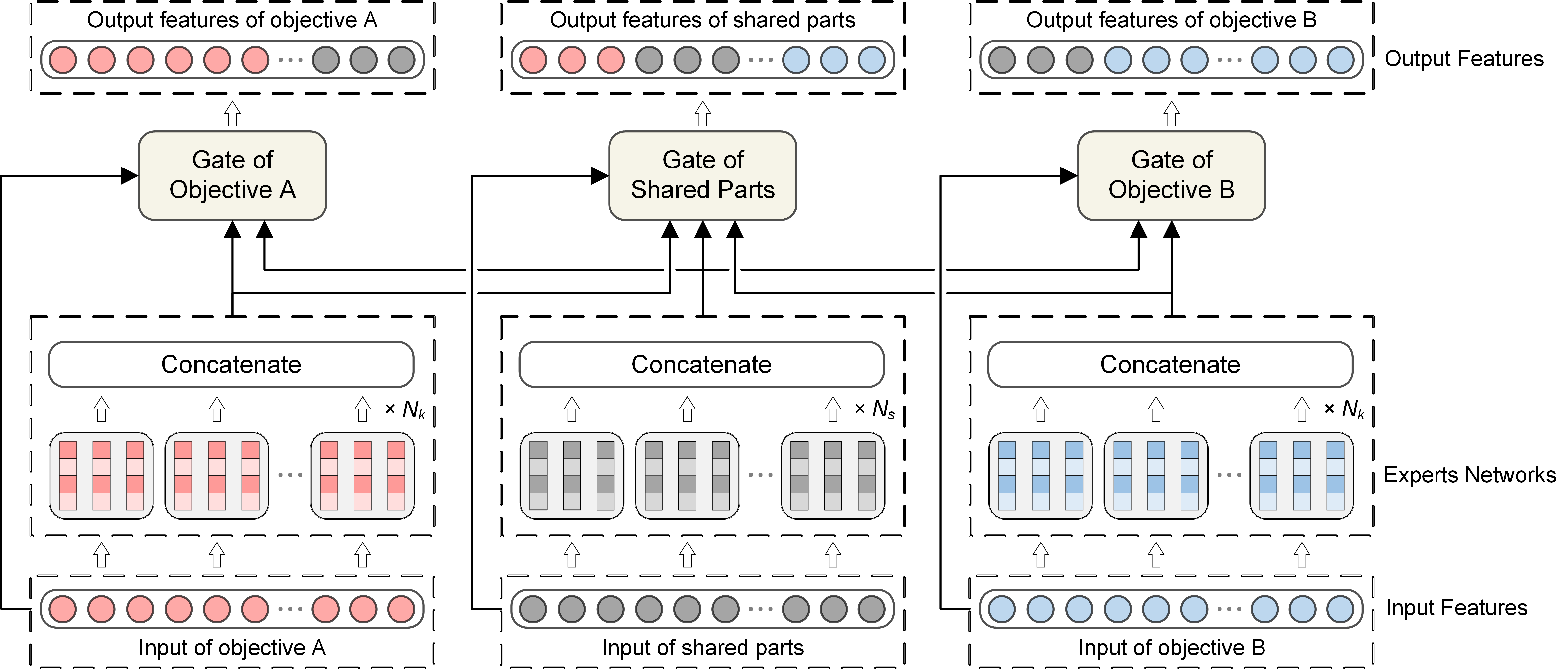}
\caption{The structure of Objective-aware Mixture-of-Experts module.}
\label{fig:omoe}
\vspace{-0.5cm}
\end{figure*}

The MOSS problem specified in~\eqref{eq:pred} tries to obtain estimates $\left\{ \hat{y}_{k} \right\}_{k \in \{1,2,\ldots,K\}}$ for each objective by minimizing the prediction loss of the inference model $f$ on each objective by solving the MOP defined in~\eqref{eq:multi-loss}. However, MOP is tough to solve and acquire the optimal solution. The main reason is that MOP has multiple objective functions, and the superiority and inferiority of MOP solutions cannot be compared and ranked by traditional size relation comparison. To evaluate the correlation between solutions, the Pareto optimality theory is proposed and defined as follows~\citep{mgda} in the parameter sharing mode:
\begin{enumerate}
\item{A parameter solution $\boldsymbol{\theta}$ is said to dominate another parameter solution $\widetilde{\boldsymbol{\theta}}$ if $\hat{\mathcal{L}}_{k}\left(\boldsymbol{\theta}_{\text{sh}}, \boldsymbol{\theta}_{k}\right) \leq \hat{\mathcal{L}}_{k}\left(\widetilde{\boldsymbol{\theta}}_{\text{sh}}, \widetilde{\boldsymbol{\theta}}_{k}\right)$ for all objectives $k$, and there is a objective $k'$ that satisfies $\hat{\mathcal{L}}_{k'}\left(\boldsymbol{\theta}_{\text{sh}}, \boldsymbol{\theta}_{k'}\right) < \hat{\mathcal{L}}_{k'}\left(\widetilde{\boldsymbol{\theta}}_{\text{sh}}, \widetilde{\boldsymbol{\theta}}_{k'}\right)$.}
\item{A parameter solution $\boldsymbol{\theta}^{*}$ is called Pareto optimal if and only if there is no parameter solution $\boldsymbol{\theta}$ dominating $\boldsymbol{\theta}^{*}$.}
\end{enumerate}

In fact, the solution of MOP is not unique, but rather a set of Pareto optimal solutions, often known as Pareto set. Besides, the KKT conditions listed below need to be met while solving MOP:
\begin{enumerate}
\item{There exists $c_{1},c_{2},\ldots,c_{K}$ that satisfies $\sum_{k=1}^{K}c_{k}=1$ and $\sum_{k=1}^{K} c_{k} \nabla_{\boldsymbol{\theta}_{\text{sh}}} \hat{\mathcal{L}}_{k}\left(\boldsymbol{\theta}_{\text{sh}}, \boldsymbol{\theta}_{k}\right)=0$.}
\item{For all objectives $k$, $\nabla_{\boldsymbol{\theta}_{k}} \hat{\mathcal{L}}_{k}\left(\boldsymbol{\theta}_{\text{sh}}, \boldsymbol{\theta}_{k}\right)=0$ is satisfied.}
\end{enumerate}

A solution that meets the above conditions is called a Pareto stationary point, and all Pareto optimal points are Pareto stationary, although the opposite is not always true. Therefore, while training the MOO model, we focus on leveraging Pareto stationary point to approximate the optimal MOO model parameters.

\section{Proposed Method}
\label{sec:proposed-method}
\subsection{Objective-aware Mixture-of-Experts Module}
The OMoE module is a multi-gate hybrid expert module inspired by~\citep{ple} that was designed under the proposed framework. In contrast to the multi-gate hybrid expert models used in~\citep{mmoe} and~\citep{bmoe}, the OMoE module explicitly distinguishes between objective-shared experts and objective-specific experts. Based on the MMoE structure of using objective-specific gate to fuse shared expert information to mitigate the NT phenomenon, the OMoE module increases objective-specific experts to achieve single-objective modeling performance, thereby solving the seesaw phenomenon caused by MOO imbalance.

Specifically, the OMoE module is composed of several objective-specific and objective-shared feature extraction blocks, each with its own set of objective-specific expert module, objective-shared expert module, and corresponding gating networks. Multi-layer networks associated with each objective make up the OMoE module's top layer. The number of objective-shared and objective-specific experts in the feature extraction blocks, as well as the width and depth of the experts networks and tower networks, are all hyperparameters. Fig.~\ref{fig:omoe} shows an example of a single feature extraction block to illustrate data flow in the OMoE module. The inputs of the current feature extraction block are respectively derived from the selected outputs of objective-specific and objective-shared expert module in the previous feature extraction block. For the $j^{\text{th}}$ feature extraction block, the outputs of $k^{\text{th}}$ objective-specific expert module and the objective-shared expert module will be selected and concatenated as follows to obtain features $O_{k}$ and $O_{s}$:
\begin{equation}\label{eq:out-fea}
\begin{aligned}
    O_{k}\left(\boldsymbol{x}^{(k)}\right)&=\left[ \text{cat} \left\{E_{(k, p)}^{\top}\right\}_{p=1}^{n_{k}}, \text{cat} \left\{E_{(s, q)}^{\top}\right\}_{q=1}^{n_{s}}\right]^{\top}, \\
    O_{s}\left(\boldsymbol{x}^{(s)}\right)&=\left[ \text{cat} \left\{\left\{E_{(k, p)}^{\top}\right\}_{p=1}^{n_{k}}\right\}_{k=1}^{K}, \text{cat} \left\{E_{(s, q)}^{\top}\right\}_{q=1}^{n_{s}}\right]^{\top}.
\end{aligned}
\end{equation}
where $\text{cat} \left\{ \boldsymbol{\alpha}^{\top}, \boldsymbol{\beta}^{\top} \right\}=\left[ \boldsymbol{\alpha}^{\top}, \boldsymbol{\beta}^{\top} \right]$ is the concatenation function, $\boldsymbol{x}^{(k)}$ is the input of the $k^{\text{th}}$ objective-specific expert module, $E_{(k, p)}$ is the output features of the $p^{\text{th}}$ expert in the $k^{\text{th}}$ objective-specific expert module, $\boldsymbol{x}^{(s)}$, $E_{(s, q)}$ correspond to the objective-shared expert module, and $n_{k}$, $n_{s}$ are the number of experts in the $k^{\text{th}}$ objective-specific expert module and objective-shared expert module respectively. It should be noted that the corresponding inputs for all modules in the first feature extraction block is the same, that is $\boldsymbol{x}^{(k)}=\boldsymbol{x}^{(s)}$.

\begin{figure}[!t]
\centering
\includegraphics[width=0.85\columnwidth]{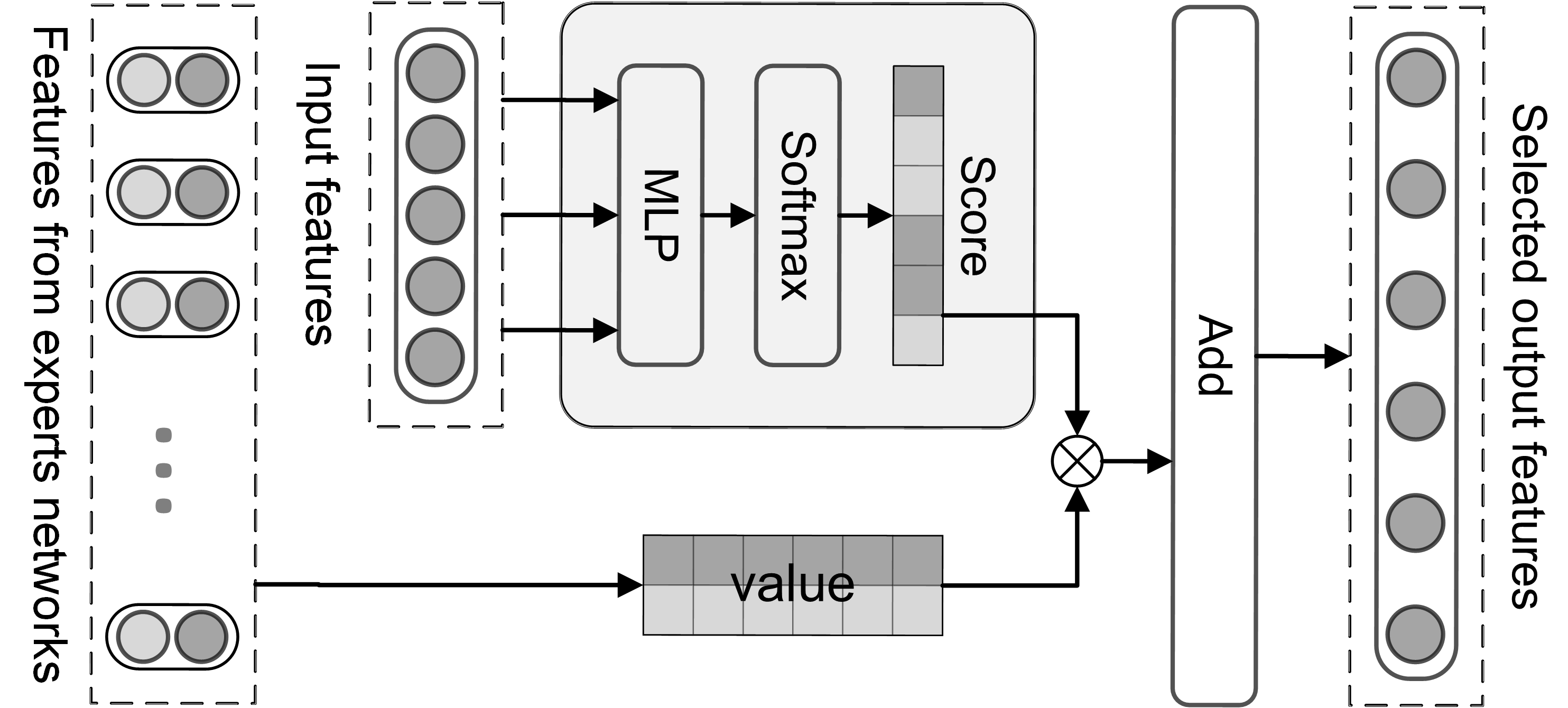}
\caption{The workflow of gating network.}
\label{fig:gating}
\vspace{-0.6cm}
\end{figure}

Following selective concatenation, the output features are selectively fused via gating networks. The gating network is a single-layer fully connected network with softmax as the activation function, and the input is used as a filter to determine the weighted sum of the chosen splicing vectors, as shown in Fig.~\ref{fig:gating}. Taking the $k^{\text{th}}$ objective-specific gating network as an example, its output after feature extraction block (FEB) is also used as the input of the $k^{\text{th}}$ objective-specific expert module in block $j+1$ as follows:
\begin{equation}\label{eq:feb}
    \text{FEB}\left(\boldsymbol{x}^{(k)}\right)=g_{k}\left( \boldsymbol{x}^{(k)} \right)O_{k}\left(\boldsymbol{x}^{(k)}\right).
\end{equation}
where $g_{k}$ is the weight calculation network and also the input linear transformation with a sofmax layer:
\begin{equation}
    g_{k}\left( \boldsymbol{x}^{(k)} \right)=\text{softmax}\left( \boldsymbol{W}_{k}\boldsymbol{x}^{(k)}\right).
\end{equation}
where $\boldsymbol{W}_{k} \in \mathbb{R}^{n \times d}$ is a trainable matrix, $n$ is the number of experts in $O_{k}$ and $d$ is the dimension of input $\boldsymbol{x}^{(k)}$.

Finally, the output of the gating network is fed into the matching tower network to obtain the predicted value:
\begin{equation}
    \hat{y}_{k} = h_{k}\left( g_{k}^{(n_{b})}\left( {\boldsymbol{x}}^{(k,n_{b})} \right) \right).
\end{equation}
where $h_{k}$ is the $k^{\text{th}}$ tower network and $n_{b}$ is the number of feature extraction block.

The OMoE module is a MOO model of soft parameter sharing framework, but unlike MMoE and cross-stitch networks, OMoE divides feature extraction into objective-specific and objective-shared parts. This design adopts separate gating networks to integrate expert information for multiple objectives to mitigate the NT phenomenon, while using objective-specific experts to achieve performance comparable to single-objective modeling to mitigate the seesaw phenomenon.

\subsection{Pareto Objective Routing Module}
The genesis of the seesaw phenomenon is more than just a problem with the architecture of MOO models. Another cause of the uneven distribution of training resources among objectives is the imbalance of loss weights. The optimization direction of the MOO model is determined by the weighted sum of the loss weight and the objective loss, and the incorrect setting of the loss weight may sacrifice the performance of the secondary objectives to improve the performance of the primary objectives. Therefore, the POR module is designed in such a way that the loss weight is dynamically modified as the model parameters are optimized.

\subsubsection{Problem Statement}
As previously stated, the MOP defined by~\eqref{eq:multi-loss} is difficult to have a unique solution to minimize the loss of all objectives. The typical solution is to convert the loss vector optimization in~\eqref{eq:multi-loss} into scalar optimization, which is realized by the weighted sum of the following formula:
\begin{equation}\label{eq:scaler-loss}
    \min_{\boldsymbol{\theta}} \mathcal{L}_{\boldsymbol{w}}\left(\boldsymbol{\theta}\right)=\min_{\boldsymbol{\theta}} \boldsymbol{w}^{\top} \mathcal{L}\left(\boldsymbol{\theta}\right)=\min_{\boldsymbol{\theta}} \sum_{k=1}^{K} w_{k} \hat{\mathcal{L}}_{k}\left(\boldsymbol{\theta}_{\text{sh}}, \boldsymbol{\theta}_{k}\right).
\end{equation}
where $\boldsymbol{w} = \left[w_{1},\ldots,w_{K}\right]^{\top}$ is the loss weight, satisfying $\sum_{k=1}^{K} w_{k}=1$ and $w_{k} \geq 0$ for all objectives $k$.

The Pareto stationary point of the multi-objective loss minimization problem in scalar form can be approximated using the following sub-optimization problem:
\begin{equation}\label{eq:sub-opt}
    \min_{\boldsymbol{w}} L\left( \boldsymbol{w} \right)= \min_{\boldsymbol{w}} \left\Vert \nabla_{\boldsymbol{\theta}_{\text{sh}}} \mathcal{L}_{\boldsymbol{w}} \right\Vert_{2}^{2} .
\end{equation}
where $\Vert\cdot\Vert_{2}$ is the $L_{2}$-norm.

The sub-optimization problem~\eqref{eq:sub-opt} is a linearly constrained optimization problem about the loss weight $\boldsymbol{w}$, and its optimal solution $\boldsymbol{w}^{*}$ can be obtained through linear search algorithm Frank-Wolfe~\citep{fw}.

\begin{table}[!b]
\vspace{-0.3cm}
\centering
\normalsize
\setlength{\tabcolsep}{3pt}
\newcolumntype{R}[1]{>{\raggedleft\let\newline\\\arraybackslash\hspace{0pt}}p{#1}}
\resizebox{\columnwidth}{!}{
\begin{tabular}{@{}R{15pt} p{234pt}@{}}
    \toprule
    \multicolumn{2}{l}{
        \begin{minipage}[t]{\columnwidth} 
            \textbf{Algorithm 1} POR for Parameter Optimization
        \end{minipage}
    } \\
    \midrule
    \multicolumn{2}{l}{
        \begin{minipage}[t]{\columnwidth}
            \textbf{Input}: $\boldsymbol{\theta}$: parameters of the MOO model; $\nabla_{\boldsymbol{\theta}_{\text{sh}}}\hat{\mathcal{L}}_{k}$: gradient of each objective loss to the objective-shared parameters; $\nabla_{\boldsymbol{\theta}_{k}}\hat{\mathcal{L}}_{k}$: gradient of each objective loss to the objective-specific parameters.
        \end{minipage}
    } \\
    \multicolumn{2}{l}{
        \begin{minipage}[t]{\columnwidth} 
            \textbf{Parameter}: $R$: the maximum number of iterations; $\boldsymbol{e}_{\hat{k}}$: a $K$-dimensional unit vector whose $\hat{k}^{\text{th}}$ element is 1.
        \end{minipage}
    } \\
    \multicolumn{2}{l}{
        \begin{minipage}[t]{\columnwidth} 
            \textbf{Output}: $\boldsymbol{w}^{*}$: loss weight which minimizes $L\left( \boldsymbol{w} \right)$; $\boldsymbol{\theta}$: optimized parameters.
        \end{minipage}
    } \\
    1: & Initialize  $\boldsymbol{w}^{(0)} = \left[w_{1},\ldots,w_{K}\right]^{\top}=\left[\frac{1}{T},\ldots,\frac{1}{T}\right]^{\top}$  \\
    2: & Compute $\boldsymbol{M}$ with element $\boldsymbol{M}_{i,j}=\left(\nabla_{\boldsymbol{\theta}_{\text{sh}}} \hat{\mathcal{L}}_{i}\right)^{\top}\left(\nabla_{\boldsymbol{\theta}_{\text{sh}}} \hat{\mathcal{L}}_{j}\right)$ \\
    3: & Set $r=0$ \\
    4: & \textbf{repeat} \\
    5: & \setlength{\parindent}{12pt} $\hat{k} \leftarrow \text{argmin}_{i} \sum_{j=1}^{K}w_{j}\boldsymbol{M}_{i,j}$ \\
    6: & \setlength{\parindent}{12pt} $v^{*} \leftarrow \text{argmin}_{v} \left[v\boldsymbol{e}_{\hat{k}}+(1-v)\boldsymbol{w}\right]^{\top}\boldsymbol{M}\left[v\boldsymbol{e}_{\hat{k}}+(1-v)\boldsymbol{w}\right]$ \\
    7: & \setlength{\parindent}{12pt} $\boldsymbol{w} \leftarrow v^{*}\boldsymbol{e}_{\hat{k}}+(1-v^{*}) \boldsymbol{w}$\\
    8: & \setlength{\parindent}{12pt} $r=r+1$ \\
    9: & \textbf{until} $v^{*}\rightarrow0$ or $r=R$ \\
    10: & Update $\boldsymbol{\theta}_{k} \leftarrow \boldsymbol{\theta}_{k}-\eta \nabla_{\boldsymbol{\theta}_{k}} \hat{\mathcal{L}}_{k}\left(\boldsymbol{\theta}_{\text{sh}}, \boldsymbol{\theta}_{k}\right)$ for all objectives \\
    11: & Update $\boldsymbol{\theta}_{\text{sh}} \leftarrow \boldsymbol{\theta}_{\text{sh}}-\eta \sum_{k=1}^{K} w_{k} \nabla_{\boldsymbol{\theta}_{\text{sh}}} \hat{\mathcal{L}}_{k}\left(\boldsymbol{\theta}_{\text{sh}}, \boldsymbol{\theta}_{k}\right)$ \\
    \bottomrule
\end{tabular}}
\label{algo:por}
\end{table}

\subsubsection{Frank-Wolfe Algorithm}
The Frank-Wolfe algorithm is a method for approximately minimizing the objective function in the feasible zone after first-order linearization. The loss function $L\left( \boldsymbol{w} \right)$ in~\eqref{eq:sub-opt} can be linearized as follows:
\begin{equation}\label{eq:linear-opt}
    \min_{\boldsymbol{w}} L\left( \boldsymbol{w} \right) \rightarrow \min_{\boldsymbol{w}} \nabla_{\boldsymbol{w}} L\left( \boldsymbol{w}^{(r)} \right)^{\top} \boldsymbol{w}.
\end{equation}
where $\boldsymbol{w}^{(r)}$ is the approximation point in current iteration. 

In the case of two objectives,~\eqref{eq:sub-opt} can be generalized as $\min_{w} \Vert w\boldsymbol{l}_{1}+(1-w)\boldsymbol{l}_{2}\Vert_{2}^{2}$, where $\boldsymbol{l}_{1}, \boldsymbol{l}_{2}$ are the loss gradient vectors of the two objectives relative to the objective-shared parameters. The Frank-Wolfe algorithm points out that the weight $w^{*}$ that makes the norm in the preceding formula the smallest is:
\begin{equation}\label{eq:w-star}
    w^{*}=\left\{
    \begin{array}{lc}
        1 & \text{if} \ \boldsymbol{l}_{1}^{\top}\boldsymbol{l}_{2} \geq \boldsymbol{l}_{1}^{\top}\boldsymbol{l}_{1} \\
        0 & \text{if} \ \boldsymbol{l}_{1}^{\top}\boldsymbol{l}_{2} \geq \boldsymbol{l}_{2}^{\top}\boldsymbol{l}_{2} \\
        \frac{(\boldsymbol{l}_{2}-\boldsymbol{l}_{1})^{\top} \boldsymbol{l}_{2}}{\Vert\boldsymbol{l}_{1}-\boldsymbol{l}_{2}\Vert_{2}^{2}} & \text{otherwise}
    \end{array}
    \right. .
\end{equation}

For the common situation where the number of objectives is higher than two, the objective gradient interconnection matrix $\boldsymbol{M}$ should be firstly defined as follows:
\begin{equation}
    \boldsymbol{M}_{i,j}=\left(\nabla_{\boldsymbol{\theta}_{\text{sh}}} \hat{\mathcal{L}}_{i}\left(\boldsymbol{\theta}_{\text{sh}}, \boldsymbol{\theta}_{i}\right)\right)^{\top}\left(\nabla_{\boldsymbol{\theta}_{\text{sh}}} \hat{\mathcal{L}}_{j}\left(\boldsymbol{\theta}_{\text{sh}}, \boldsymbol{\theta}_{j}\right)\right).
\end{equation}

Further calculate the weighted row sum of $\boldsymbol{M}$ to find the objective $\hat{t}$ that minimizes the weighted sum of its own gradient multiplied by the gradients of other objectives. The weight $v^{*}$ that makes $\left[v\boldsymbol{e}_{\hat{k}}+(1-v)\boldsymbol{w}^{(r)}\right]^{\top}\boldsymbol{M}\left[v\boldsymbol{e}_{\hat{k}}+(1-v)\boldsymbol{w}^{(r)}\right]$ the least can be derived by~\eqref{eq:w-star}. Finally, the loss weight $\boldsymbol{w}$ is updated as follows:
\begin{equation}
    \boldsymbol{w}^{(r+1)} \leftarrow v^{*}\boldsymbol{e}_{\hat{k}}+(1-v^{*}) \boldsymbol{w}^{(r)}.
\end{equation}
where $r$ is the number of iterations and $\boldsymbol{e}_{\hat{k}} \in \mathbb{R}^{K}$ is a unit vector whose $\hat{k}^{\text{th}}$ element is 1.

Iteratively update the loss weight $\boldsymbol{w}$ until $v^{*}$ approaches zero, and then find the loss weight $\boldsymbol{w}^{*}$ that eventually meets the normalization and non-negative conditions. Ultimately, the objective-specific parameters are updated in the same way as the regular network parameters, with the objective-shared parameters updated using the loss weights:
\begin{equation}\label{eq:update-param}
\begin{aligned}
    \boldsymbol{\theta}_{k}& \leftarrow \boldsymbol{\theta}_{k}-\eta \nabla_{\boldsymbol{\theta}_{k}} \hat{\mathcal{L}}_{k}\left(\boldsymbol{\theta}_{\text{sh}}, \boldsymbol{\theta}_{k}\right) \quad k\in\{1,\ldots,K\}, \\
    \boldsymbol{\theta}_{\text{sh}}& \leftarrow \boldsymbol{\theta}_{\text{sh}}-\eta \sum_{k=1}^{K} w_{k} \nabla_{\boldsymbol{\theta}_{\text{sh}}} \hat{\mathcal{L}}_{k}\left(\boldsymbol{\theta}_{\text{sh}}, \boldsymbol{\theta}_{k}\right).
\end{aligned}
\end{equation}
where $\eta$ is the learning rate.

Algorithm 1 displays the POR module for parameter optimization of the MOO model.

\subsubsection{Analysis of Primal-Dual Convergence}
We shall demonstrate the primal-dual convergence of the Frank-Wolfe algorithm, which is stronger than the convergence of the primal problem, using a simple dual gap proof framework. The convergence of the primal error is given first in Theorem~\ref{thm:primal-conv}.

\begin{thm}\label{thm:primal-conv}
For each $r \ge 1$, the iterates $\boldsymbol{w}^{(r)}$ of the Frank-Wolfe algorithm satisfy
\begin{equation}
    L(\boldsymbol{w}^{(r)})-L(\boldsymbol{w}^{*}) \leq \frac{2 C_{f}}{r+2}(1+\delta).
\end{equation}
where $\delta \ge 0$ is the accuracy to which the linear subproblems~\eqref{eq:linear-opt} are solved, and $C_{f}$ is the curvature constant of loss function $L$ which is defined as
\begin{equation}
    C_{f}=\sup_{\gamma \in[0,1]} \frac{2}{\gamma^{2}}(L(\boldsymbol{w}')-L(\boldsymbol{w})-\langle\boldsymbol{w}'-\boldsymbol{w}, \nabla L(\boldsymbol{w})\rangle).
\end{equation}
where $\boldsymbol{w}'=\boldsymbol{w}+\gamma(\boldsymbol{s}-\boldsymbol{w})$ and $\boldsymbol{s}$ is a feasible solution.
\end{thm}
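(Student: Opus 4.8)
The plan is to run the classical primal--dual analysis of Frank--Wolfe, adapted to the line-search variant in Algorithm~1. First I would record the two structural ingredients. Because $L(\boldsymbol{w})=\boldsymbol{w}^{\top}\boldsymbol{M}\boldsymbol{w}$ and $\boldsymbol{M}$ is the Gram matrix of the shared-parameter gradients, $\boldsymbol{M}$ is positive semidefinite and $L$ is convex on the simplex; this convexity is what lets the duality gap control the primal error. The curvature constant $C_{f}$ is, directly from its defining supremum, the smallest number for which the quadratic upper model
\begin{equation*}
    L(\boldsymbol{w}') \le L(\boldsymbol{w}) + \langle \boldsymbol{w}'-\boldsymbol{w},\, \nabla L(\boldsymbol{w})\rangle + \tfrac{\gamma^{2}}{2}C_{f}
\end{equation*}
holds along every segment $\boldsymbol{w}'=\boldsymbol{w}+\gamma(\boldsymbol{s}-\boldsymbol{w})$ with $\gamma\in[0,1]$. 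These two facts are the whole engine of the argument.

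Next I would assemble the one-step recursion. Applying the descent model at $\boldsymbol{w}=\boldsymbol{w}^{(r)}$, $\boldsymbol{s}=\boldsymbol{e}_{\hat{k}}$ and step size $\gamma=\frac{2}{r+2}$, I define the duality gap $g(\boldsymbol{w}^{(r)})=\max_{\boldsymbol{s}}\langle \boldsymbol{w}^{(r)}-\boldsymbol{s},\,\nabla L(\boldsymbol{w}^{(r)})\rangle$ and encode the inexact minimization~\eqref{eq:linear-opt} as the additive guarantee $\langle \boldsymbol{e}_{\hat{k}},\nabla L(\boldsymbol{w}^{(r)})\rangle \le \min_{\boldsymbol{s}}\langle \boldsymbol{s},\nabla L(\boldsymbol{w}^{(r)})\rangle + \frac{1}{2}\delta\gamma C_{f}$. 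Writing $h_{r}:=L(\boldsymbol{w}^{(r)})-L(\boldsymbol{w}^{*})$ and substituting, the linear term contributes $-\gamma\, g(\boldsymbol{w}^{(r)})$ together with the inexactness penalty $\frac{1}{2}\delta\gamma^{2}C_{f}$, so the two $C_{f}$ terms merge and yield
\begin{equation*}
    h_{r+1} \le h_{r} - \gamma\, g(\boldsymbol{w}^{(r)}) + \tfrac{\gamma^{2}}{2}C_{f}(1+\delta).
\end{equation*}
The key step is the convexity estimate $g(\boldsymbol{w}^{(r)}) \ge L(\boldsymbol{w}^{(r)})-L(\boldsymbol{w}^{*}) = h_{r}$, obtained from the lower linear bound $L(\boldsymbol{w}^{*}) \ge L(\boldsymbol{w}^{(r)}) + \langle \boldsymbol{w}^{*}-\boldsymbol{w}^{(r)},\nabla L(\boldsymbol{w}^{(r)})\rangle$. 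Substituting it collapses the recursion to $h_{r+1}\le(1-\gamma)h_{r}+\frac{\gamma^{2}}{2}C_{f}(1+\delta)$.

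Finally I would close by induction on $r$ with $\gamma=\frac{2}{r+2}$. The base case $r=0$ uses $\gamma=1$, giving $h_{1}\le\frac{1}{2}C_{f}(1+\delta)\le\frac{2C_{f}}{3}(1+\delta)$; the inductive step inserts the hypothesis $h_{r}\le\frac{2C_{f}}{r+2}(1+\delta)$ and reduces the claim to the elementary inequality $(r+1)(r+3)\le(r+2)^{2}$, which holds since $3\le4$. One loose end to dispatch cleanly is that Algorithm~1 performs an \emph{exact} line search for $v^{*}$ in Line~6 rather than taking the fixed step $\frac{2}{r+2}$; since that search minimizes $L$ along the same segment as the fixed step, the realized objective is no larger, and the bound proven for $\gamma=\frac{2}{r+2}$ transfers verbatim to the iterates actually produced. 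I expect the main obstacle to be precisely this bookkeeping around the inexactness term: one must route the subproblem error through the step-scaled bound $\frac{1}{2}\delta\gamma C_{f}$ so that it aggregates into the single clean factor $(1+\delta)$, while simultaneously justifying the line-search-to-fixed-step reduction, rather than any individual estimate being deep.
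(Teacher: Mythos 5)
Your proposal is correct and follows essentially the same route as the paper's proof: the curvature-based one-step descent inequality with the $\tfrac{1}{2}\delta\gamma C_{f}$ inexactness term (the paper's Lemma~\ref{lem:iter-inequality}), weak duality $\zeta(\boldsymbol{w})\le\phi(\boldsymbol{w})$ to collapse the recursion to $h_{r+1}\le(1-\gamma)h_{r}+\tfrac{\gamma^{2}}{2}C_{f}(1+\delta)$, and induction with step size $\gamma=\tfrac{2}{r+2}$. Your two supplementary points --- convexity of $L$ via the positive semidefiniteness of the Gram matrix $\boldsymbol{M}$, and the reduction from the exact line search of Algorithm~1 to the fixed-step analysis --- are left implicit in the paper and refine rather than alter the argument.
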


The proof of the above convergence theorem is given in Appendix for completeness. Theorem~\ref{thm:primal-conv} guarantees a small raw error, while the optimal value $L(\boldsymbol{w}^{*})$ and the curvature constant $C_{f}$ are usually unknown. The surrogate duality gap is defined below for more convenient estimation of the approximation quality
\begin{equation}
    \phi(\boldsymbol{w})=\max \ \langle\boldsymbol{w}-\boldsymbol{s}, \nabla L(\boldsymbol{w})\rangle.
\end{equation}

Convexity of $L$ implies that the linearization $L(\boldsymbol{w})+\langle\boldsymbol{w}-\boldsymbol{s}, \nabla L(\boldsymbol{w})\rangle$ always lies below the function $L$, which provides the property of the duality gap, $\phi(\boldsymbol{w}) \ge L(\boldsymbol{w})-L(\boldsymbol{w}^{*})$. 

The Frank-Wolfe algorithm obtain guaranteed small duality gap $\phi(\boldsymbol{w}) \le \epsilon$ after $O(\frac{1}{\epsilon})$ iterations even if the linear subproblems~\ref{eq:linear-opt} are only solved approximately.

\begin{thm}\label{thm:dual-conv}
If the Frank-Wolfe algorithm is run for $R \ge 2$ iterations, then the algorithm has a bounded duality gap with iterate $\boldsymbol{w}^{(r)}$, $1 \le r \le R$
\begin{equation}
    \phi(\boldsymbol{w}^{(r)}) \leq \frac{2 \beta C_{f}}{R+2}(1+\delta).
\end{equation}
where $\beta = \frac{27}{8}$.
\end{thm}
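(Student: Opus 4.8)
The plan is to reduce the duality-gap bound to the primal convergence already established in Theorem~\ref{thm:primal-conv}, by deriving a single per-step descent inequality and summing it over a carefully chosen window of iterations. Throughout, write $h(\boldsymbol{w}) = L(\boldsymbol{w}) - L(\boldsymbol{w}^{*})$ for the primal error and $\tilde{C} = C_{f}(1+\delta)$, and use the canonical step size $\gamma_{r} = \frac{2}{r+2}$ underlying the $\frac{2C_{f}}{r+2}$ rate of Theorem~\ref{thm:primal-conv}.

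First I would establish the one-step improvement bound
\begin{equation}
    h(\boldsymbol{w}^{(r+1)}) \le h(\boldsymbol{w}^{(r)}) - \gamma_{r}\,\phi(\boldsymbol{w}^{(r)}) + \tfrac{1}{2}\gamma_{r}^{2}\tilde{C}.
\end{equation}
This follows from the definition of the curvature constant $C_{f}$, which controls the second-order remainder of $L$ along the segment from $\boldsymbol{w}^{(r)}$ to the Frank-Wolfe vertex $\boldsymbol{s}$, combined with the fact that $\boldsymbol{s}$ solves the linearized subproblem~\eqref{eq:linear-opt} to accuracy $\delta$, so that $\langle \boldsymbol{s} - \boldsymbol{w}^{(r)}, \nabla L(\boldsymbol{w}^{(r)})\rangle \le -\phi(\boldsymbol{w}^{(r)}) + \tfrac{1}{2}\delta\gamma_{r}C_{f}$; the two $C_{f}$ terms then merge into $\tfrac{1}{2}\gamma_{r}^{2}C_{f}(1+\delta)$. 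Because Step~6 of Algorithm~1 performs an exact line search, the realized decrease is at least as large as that of the fixed step $\gamma_{r}$, so the inequality remains valid for the iterates the algorithm actually produces.

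Next I would rearrange to isolate $\gamma_{r}\phi(\boldsymbol{w}^{(r)})$ and sum over $r = T_{0},\ldots,R$ for a window start $T_{0}$ fixed later. The primal-error differences telescope to $h(\boldsymbol{w}^{(T_{0})}) - h(\boldsymbol{w}^{(R+1)})$; dropping the non-negative terminal term and bounding $h(\boldsymbol{w}^{(T_{0})}) \le \frac{2\tilde{C}}{T_{0}+2}$ by Theorem~\ref{thm:primal-conv} gives
\begin{equation}
    \sum_{r=T_{0}}^{R} \gamma_{r}\,\phi(\boldsymbol{w}^{(r)}) \le \frac{2\tilde{C}}{T_{0}+2} + \frac{\tilde{C}}{2}\sum_{r=T_{0}}^{R}\gamma_{r}^{2}.
\end{equation}
Lower-bounding the left side by $\bigl(\min_{T_{0}\le r \le R}\phi(\boldsymbol{w}^{(r)})\bigr)\sum_{r=T_{0}}^{R}\gamma_{r}$ then yields an explicit bound on the smallest duality gap in the window, as a ratio whose numerator is the right-hand side above and whose denominator is $\sum_{r=T_{0}}^{R}\gamma_{r}$.

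The remaining and most delicate step is estimating the two sums and optimizing over $T_{0}$. I would lower-bound $\sum_{r=T_{0}}^{R}\frac{2}{r+2}$ by the integral $2\ln\frac{R+3}{T_{0}+2}$ and upper-bound $\sum_{r=T_{0}}^{R}\frac{4}{(r+2)^{2}}$ by a telescoping-integral estimate of order $\frac{1}{T_{0}+1}$. Taking $T_{0} = \lceil R/2\rceil$ turns the logarithmic factor into the constant $2\ln 2$ and forces both the $\frac{1}{T_{0}+2}$ term and the sum-of-squares term to scale as $\frac{1}{R+2}$; collecting the resulting numerical constants is precisely what yields the factor $\beta = \frac{27}{8}$. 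I expect this constant-chasing — making the window split and the two sum estimates tight enough to land on $\frac{27}{8}$ rather than a looser constant — to be the main obstacle, whereas the descent inequality and the telescoping are routine. Finally, since convexity gives $\phi(\boldsymbol{w}) \ge h(\boldsymbol{w}) \ge 0$, the resulting bound on $\min_{1\le r \le R}\phi(\boldsymbol{w}^{(r)})$ simultaneously controls the primal error and establishes the claimed primal-dual convergence.
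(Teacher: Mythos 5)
Your skeleton matches the paper's proof more closely than it may appear: the paper also combines the one-step descent inequality (its Lemma~\ref{lem:iter-inequality}, identical to your first display, including the $\tfrac{1}{2}\delta\gamma C_{f}$ handling of inexact linear minimization) with the primal rate of Theorem~\ref{thm:primal-conv} applied at the start of a trailing window of iterations, and then sums over that window. The paper phrases this as a proof by contradiction --- assume $\phi^{(r)} > \frac{\beta C}{R+2}$ throughout the window and derive $\zeta^{(R+1)} < 0$ --- but that is just the contrapositive of your ``min over the window'' formulation, so that difference is cosmetic. Where you genuinely diverge is in how the window sums are estimated, and this is where your proposal has a real gap: the paper does \emph{not} use integral/harmonic estimates at all. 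It takes the window to be the last \emph{third} ($r_{\min}=\lceil\mu U\rceil-2$ with $\mu=\tfrac{2}{3}$, $U=R+2$) and bounds every summand uniformly via $\mu U \le r+2 \le U$, i.e.\ $\gamma_{r} \ge \frac{2}{U}$ and $\gamma_{r}^{2} \le \frac{4}{(\mu U)^{2}}$, so each step contributes a constant $\frac{2\beta C - C/\mu^{2}}{U^{2}}$ and the algebra closes exactly when $1-(1-\mu)\left(2\mu\beta-\tfrac{1}{\mu}\right)=0$, which for $\mu=\tfrac{2}{3}$ forces $\beta=\tfrac{27}{8}$. No logarithm ever appears.

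By contrast, your integral estimates over the last \emph{half} produce a denominator $\sum_{r=T_{0}}^{R}\gamma_{r} \approx 2\ln 2$ and a numerator of order $\frac{6\tilde{C}}{R+2}$, i.e.\ an asymptotic constant $\beta \approx \frac{3}{2\ln 2} \approx 2.16$ --- not $\tfrac{27}{8}$. So the claim that your constant-chasing ``lands precisely on $\tfrac{27}{8}$'' is false; it lands somewhere else. Since $\frac{3}{2\ln 2} < \frac{27}{8}$, your route could in principle prove a \emph{stronger} bound and hence the theorem, but only if the non-asymptotic estimates are carried out carefully: with the crude bounds as you state them (numerator $\le \frac{6\tilde{C}}{R+2}$, denominator $\ge 2\ln\frac{R+3}{T_{0}+2}$), the ratio exceeds $\frac{27}{16}\tilde{C}$ at $R=2$, so the small-$R$ cases covered by the theorem ($R \ge 2$) are not actually verified by what you wrote. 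To repair the proof you should either (a) switch to the paper's uniform per-step bounding on the last-third window, which yields $\beta=\tfrac{27}{8}$ cleanly for all admissible $R$, or (b) keep your averaging argument but replace the integral estimates with exact telescoping bounds (e.g.\ $\sum \frac{1}{(r+2)^{2}} \le \frac{1}{T_{0}+1}-\frac{1}{R+2}$ and an exact lower bound on the harmonic sum) and check the resulting constant dominates the claim for every $R \ge 2$, not just asymptotically.
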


By proving that the duality gap cannot be kept large in multiple iterations, Theorem~\ref{thm:dual-conv} is proved by contradiction in Appendix. 

\subsection{Architecture \& Procedure}
\begin{figure}[!t]
\centering
\includegraphics[width=\columnwidth]{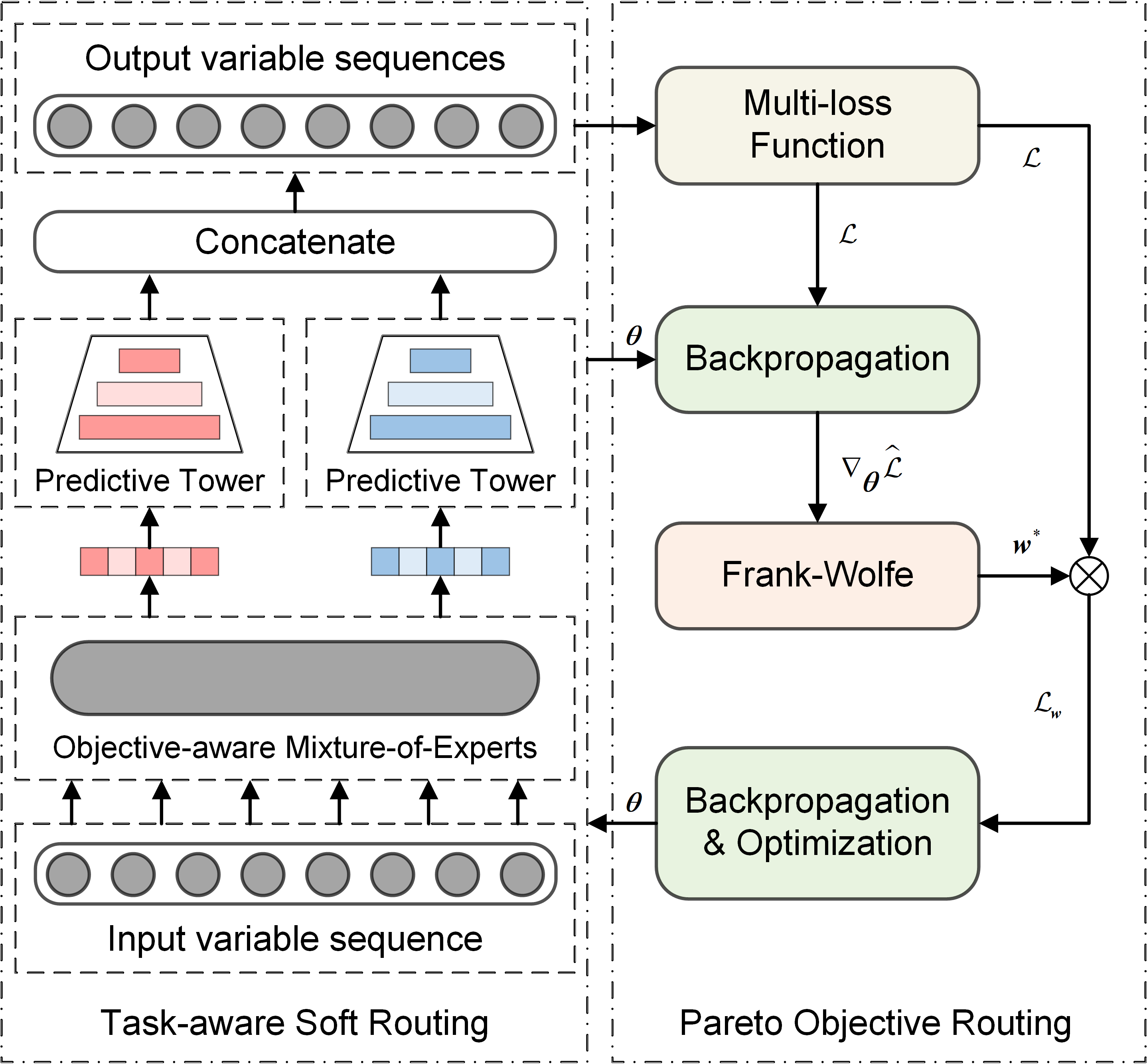}
\caption{The general architecture of the proposed Task-aware Mixture-of-Experts for achieving the Pareto optimum.}
\label{fig:general-framework}
\vspace{-0.5cm}
\end{figure}

Fig.~\ref{fig:general-framework} depicts the general architecture of the proposed TMoE-P model. The model is divided into two components: the OMoE module and the POR module. The input is first routed through the OMoE module, where objective-specific and objective-shared features are progressively separated using stacked feature extraction blocks. After that, the gating networks screen and fuse the separated features through~\eqref{eq:out-fea} and~\eqref{eq:feb}. Additionally, the output of the last feature extraction block will disgard the part of the objective-shared gating network and estimate the quality variables via the tower networks.

The predicted quality variables sequence is utilized in conjunction with the real values to calculate the loss of each objective, while backpropagation is employed to derive the gradient of each objective loss for the objective-shared parameters. The gradient will be submitted to the POR module, which will thereafter iteratively search for the loss weight $\boldsymbol{w}^{*}$ that minimizes~\eqref{eq:sub-opt}. The multi-objective loss $\mathcal{L}_{\boldsymbol{w}}$ is calculated after $\boldsymbol{w}^{*}$ and the loss of each objective have been weighted. Finally, the network parameters are optimized using~\eqref{eq:update-param} to approach the Pareto optimal network parameters.
\section{Experiments}
In this section, we carry out extensive experiments using real material measurement data to assess the effectiveness of the proposed model.

\subsection{Experiment Setup}
\subsubsection{Dataset}
\begin{figure}[!h]
\vspace{-0.2cm}
\centering
\includegraphics[width=0.95\columnwidth]{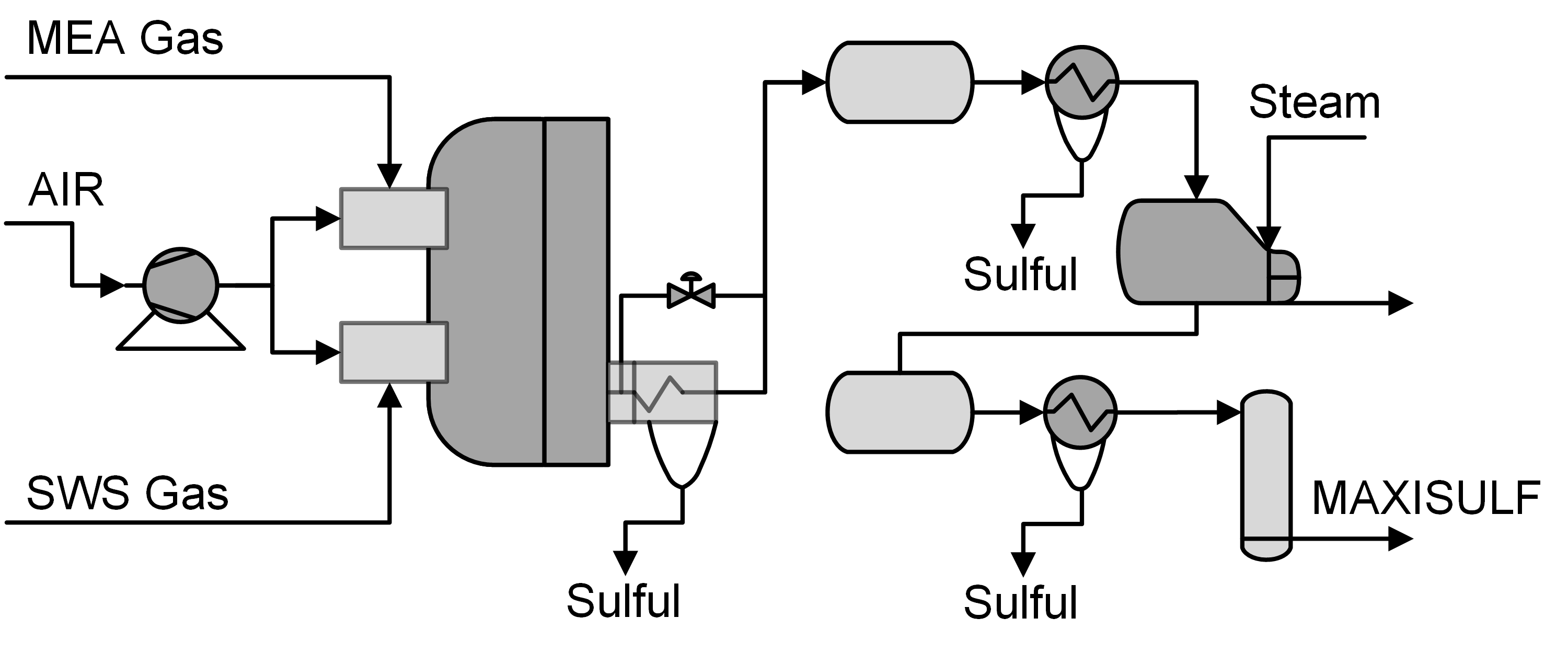}
\caption{Process flow of the SRU.}
\label{fig:sru}
\vspace{-0.2cm}
\end{figure}

The dataset comes from Sulfur Recovery unit (SRU), which is an essential part of managing sulfur emissions~\citep{sru}. Fig.~\ref{fig:sru} depicts an SRU made up of four parallel sulfur production lines, the major inputs of which are $\hhs$-enriched MEA gas and $\hhs$- and $\text{NH}_{3}$-enriched SWS gas.

The key reaction in SRU that can purify acid gas and generate sulfur as a byproduct is as follows:
\begin{equation}\label{eq:sru}
    \left\{
    \begin{array}{ll}
        3 \hhs+\frac{1}{2} \text{O}_{2} & \rightarrow \soo+2 \hhs+\text{H}_{2} \text{O} \\
        \soo+2 \hhs & \rightarrow \text{S}_{x}+2 \text{H}_{2} \text{O}
    \end{array}
    \right. .
\end{equation}

In practical industrial applications, plant operators must manually adjust the air-to-feed ratio such that the concentration ratio of $\hhs$ and $\soo$ is roughly 2:1, where online analyzers are inseparable. However, it's simple for acid gas to harm the instrument itself, hence soft sensors should be considered.

\begin{table}[!t]
\vspace{-0.1cm}
\caption{Process variables and quality variables for SRU}
\centering
\setlength{\tabcolsep}{2.5mm}{
\resizebox{\columnwidth}{!}{
\begin{tabular}{cll}
    \toprule
    Variable & Description  & Unit     \\
    \midrule
    $x_{1}$  & Flow of MEA gas                  & $\text{m}^{3}\text{/s}$     \\
    $x_{2}$  & Flow of air                      & $\text{m}^{3}\text{/s}$     \\
    $x_{3}$  & Flow of secondary air            & $\text{m}^{3}\text{/s}$     \\
    $x_{4}$  & Flow of SWS gas                  & $\text{m}^{3}\text{/s}$     \\
    $x_{5}$  & Flow of air in SWS zone          & $\text{m}^{3}\text{/s}$     \\
    $y_{1}$  & Concentration of $\hhs$ in tail gas & $\text{moles/}\text{m}^{3}$ \\
    $y_{2}$  & Concentration of $\soo$ in tail gas & $\text{moles/}\text{m}^{3}$ \\
    \bottomrule
\end{tabular}}}
\label{tab:variables}
\vspace{-0.6cm}
\end{table}

Table.~\ref{tab:variables} displays the process variables and quality variables collected from the SRU industrial history data. Taking into account the process dynamics and time-delay characteristics, the actual input of the soft sensor model at time $t$ is designed as follows:
\begin{equation}
    \boldsymbol{x}_{t}=\left[ x_{1}(t),\ldots,x_{1}(t-9),\dots,x_{D}(t),\ldots,x_{D}(t-9) \right]^{\top}.
\end{equation}
where $D=5$, $x_{d}(t), d=1,2,\ldots,D$ are the input variables at time $t$, $x_{d}(t-z), z=1,2,\dots,9$ are corresponding lagged values. After removing null values, the dataset comprises a total of 10,000 data samples. 

\subsubsection{Baseline Models}
In experiments, we compare the proposed model to conventional regression prediction models such as partial least squares regression (PLSR), AE, variant of AE like stacked autoencoder (SAE) and gated stacked target-related autoencoder (GSTAE)~\citep{gstae}, long short-term memory network (LSTM), variable attention LSTM (VA-LSTM) network~\citep{va-lstm} and supervised LSTM (SLSTM) network~\citep{slstm}. In addition, we compared SOTA MTL models such as MMoE~\citep{mmoe}, PLE~\citep{ple}, and BMoE~\citep{bmoe}. The appendix provides descriptions of baseline models we used. Since certain regression prediction models can only predict univariate, we undertake a fair assessment of quality variables using univariate modeling. And for regression prediction models capable of multivariate prediction, we substitute their multivariate prediction heads with objective-specific tower networks and transform them into MOO models for comparison.

\subsubsection{Training \& Evaluation}
All quality variable prediction models in experiments are trained and evaluated through MSE loss. The SRU dataset is split into training, validation and test set at a ratio of 6:2:2. In all MOO models and single-objective models, we use RELU-activated MLP networks for each expert or feature extraction layer. The proposed model necessitates the adjustment of several hyperparameters: the number of feature extraction blocks, objective-specific experts, and objective-shared experts. The evaluation metrics employed RMSE, MAE, and $\rr$, and all experiments were carried out on Ubuntu 18.04.2 LTS based on Python 3.7.

\subsection{Performance Evaluation}

\begin{table*}[t]
\vspace{-0.1cm}
\caption{Experimental results on the SRU dataset}
\centering
\resizebox{\linewidth}{!}{
\setlength{\tabcolsep}{2.5mm}{
\begin{tabular}{clcccccc}
    \toprule
    \multicolumn{2}{c}{\multirow{2}{*}{Models}} & \multicolumn{3}{c}{$\hhs$}                          & \multicolumn{3}{c}{$\soo$} \\
    \multicolumn{2}{c}{}       & RMSE           & MAE            & $\rr$             & RMSE           & MAE            & $\rr$ \\
    \midrule
    \multirow{7}{*}{ST} & PLSR & 0.0323$\pm$0.0002 & 0.0187$\pm$0.0002 & 0.6299$\pm$0.0100 & 0.0271$\pm$0.0003 & 0.0191$\pm$0.0001 & 0.7730$\pm$0.0010 \\
                        & AE & 0.0264$\pm$0.0037 & 0.0171$\pm$0.0017 & 0.6950$\pm$0.0936 & 0.0243$\pm$0.0023 & 0.0176$\pm$0.0009 & 0.7833$\pm$0.0456 \\
                        & SAE & 0.0198$\pm$0.0010 & 0.0133$\pm$0.0003 & 0.8008$\pm$0.0228 & 0.0240$\pm$0.0012 & 0.0171$\pm$0.0005 & 0.7916$\pm$0.0131 \\
                        & GSTAE & 0.0226$\pm$0.0019 & 0.0142$\pm$0.0008 & 0.7731$\pm$0.0129 & 0.0247$\pm$0.0012 & 0.0177$\pm$0.0006 & 0.7767$\pm$0.0179 \\
                        & LSTM & 0.0221$\pm$0.0024 & 0.0146$\pm$0.0009 & 0.7719$\pm$0.0127 & 0.0228$\pm$0.0005 & 0.0170$\pm$0.0003 & 0.8045$\pm$0.0178 \\
                        & VA-LSTM & 0.0202$\pm$0.0032 & 0.0131$\pm$0.0009 & 0.8183$\pm$0.0194 & 0.0225$\pm$0.0008 & 0.0170$\pm$0.0004 & 0.8180$\pm$0.0080 \\
                        & SLSTM & 0.0215$\pm$0.0037 & 0.0136$\pm$0.0061 & 0.8124$\pm$0.0674 & \underline{0.0202$\pm$0.0038} & \underline{0.0148$\pm$0.0047} & \underline{0.8568$\pm$0.0488} \\ 
    \midrule
    \multirow{7}{*}{MT} & GSTAE & 0.0226$\pm$0.0020 & 0.0141$\pm$0.0008 & 0.8319$\pm$0.0230 & 0.0249$\pm$0.0012 & 0.0178$\pm$0.0005 & 0.8065$\pm$0.0172 \\
                        & VA-LSTM & 0.0205$\pm$0.0023 & 0.0133$\pm$0.0007 & 0.8351$\pm$0.0140 & 0.0238$\pm$0.0008 & 0.0175$\pm$0.0004 & 0.8091$\pm$0.0098 \\
                        & SLSTM & 0.0245$\pm$0.0073 & 0.0193$\pm$0.0074 & 0.8051$\pm$0.1166 & 0.0247$\pm$0.0051 & 0.0164$\pm$0.0066 & 0.8024$\pm$0.0852 \\
                        & MMoE & 0.0225$\pm$0.0018 & 0.0147$\pm$0.0008 & 0.8444$\pm$0.0163 & 0.0235$\pm$0.0018 & 0.0173$\pm$0.0004 & 0.8292$\pm$0.0267 \\
                        & BMoE & 0.0207$\pm$0.0013 & 0.0149$\pm$0.0007 & 0.8665$\pm$0.0163 & 0.0222$\pm$0.0006 & 0.0176$\pm$0.0006 & 0.8534$\pm$0.0079 \\
                        & PLE & \underline{0.0192$\pm$0.0010} & \underline{0.0131$\pm$0.0005} & \underline{0.8764$\pm$0.0240} & 0.0225$\pm$0.0012 & 0.0167$\pm$0.0012 & 0.8371$\pm$0.0236 \\
                        & \textbf{Ours} & \textbf{0.0177$\pm$0.0004}* & \textbf{0.0122$\pm$0.0008}* & \textbf{0.8845$\pm$0.0206} & \textbf{0.0207$\pm$0.0023} & \textbf{0.0153$\pm$0.0017} & \textbf{0.8626$\pm$0.0243} \\
    \bottomrule
\end{tabular}}}
\label{tab:exp-on-sru}
\vspace{-0.4cm}
\end{table*}

The baseline approaches and the proposed TMoE-P model's prediction performance evaluation metrics for the SRU dataset are provided in Table.~\ref{tab:exp-on-sru}, together with the mean and variation. And Fig.~\ref{fig:pred-error} depicts the prediction curve, the accompanying residual histogram, and the kernel density estimation curve. 

\begin{figure}[t]
\vspace{-0.15cm}
\setlength{\abovecaptionskip}{-0.2cm}
\centering
\includegraphics[width=1.04\columnwidth]{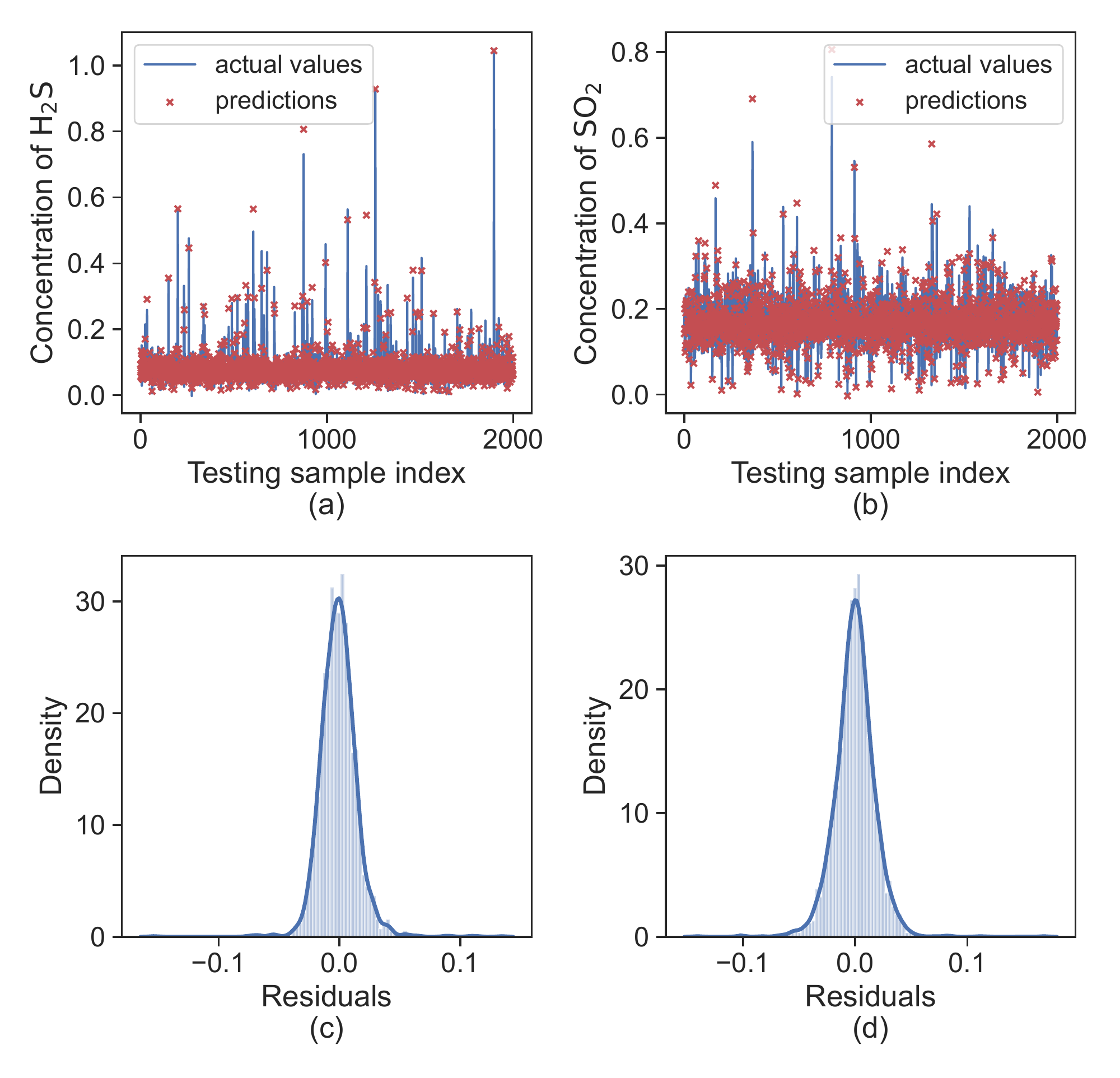}
\caption{Prediction curve, accompanying residual histogram, and the kernel density estimation curve of TMoE-P. (a) and (c) for $\hhs$, (b) and (d) for $\soo$.}
\label{fig:pred-error}
\vspace{-0.5cm}
\end{figure}

The outcomes demonstrate that the proposed TMoE-P model greatly outperforms the baseline models in the concentration prediction of both $\hhs$ and $\soo$. As seen from SRU's reaction process~\eqref{eq:sru}, there is a inverse and complex relationship between between the concentration of $\hhs$ and $\soo$, leading to the occurence of NT phenomenon. Because the dependencies between objectives are challenging to describe when multi-objective joint modeling is required for a single-objective regression prediction model, this is notably evident in the SLSTM. In single-objective modeling of $\soo$, the SLSTM model scored the greatest metric among most of the baseline approaches; however, when employed in multi-objective modeling, the $\rr$ of $\soo$ concentration prediction declined by 0.054.

Furthermore, our experimental results reveal that in the multi-objective scenario of $\hhs$ and $\soo$ concentration prediction, the majority of the multi-objective models suffer from seesaw phenomenon. Specifically, several models decrease the accuracy of $\soo$ while improving the accuracy of $\hhs$. Take the VA-LSTM model for example, the $\rr$ of $\hhs$ concentration prediction increased by 0.02 while the $\rr$ of $\soo$ concentration prediction decreased by about 0.01. On this premise, the MMoE and BMoE models utilize the mixture-of-experts structure to balance multiple objectives, and significantly improve $\hhs$ with a slight decrease in $\soo$ performance. The PLE model incorporates objective-specific experts, which improves the accuracy of $\hhs$ concentration prediction while exacerbating the seesaw phenomenon. The proposed model TMoE-P employs objective-aware mixture-of-experts, uses the POR module to dynamically modify target weights, and can effectively extract correlation information between objectives, outperforming or even outperforming single modeling of each objective.

The kernel density estimation curve in Fig.~\ref{fig:pred-error} ensures that the residuals are Gaussian white noise with zero mean and variance concentration, which verifies that TMoE-P is an unbiased estimation model in statistical significance

\begin{figure}[t]
\vspace{-0.4cm}
\setlength{\abovecaptionskip}{-1pt}
\leftline{\subfigure{\includegraphics[width=1.02\columnwidth]{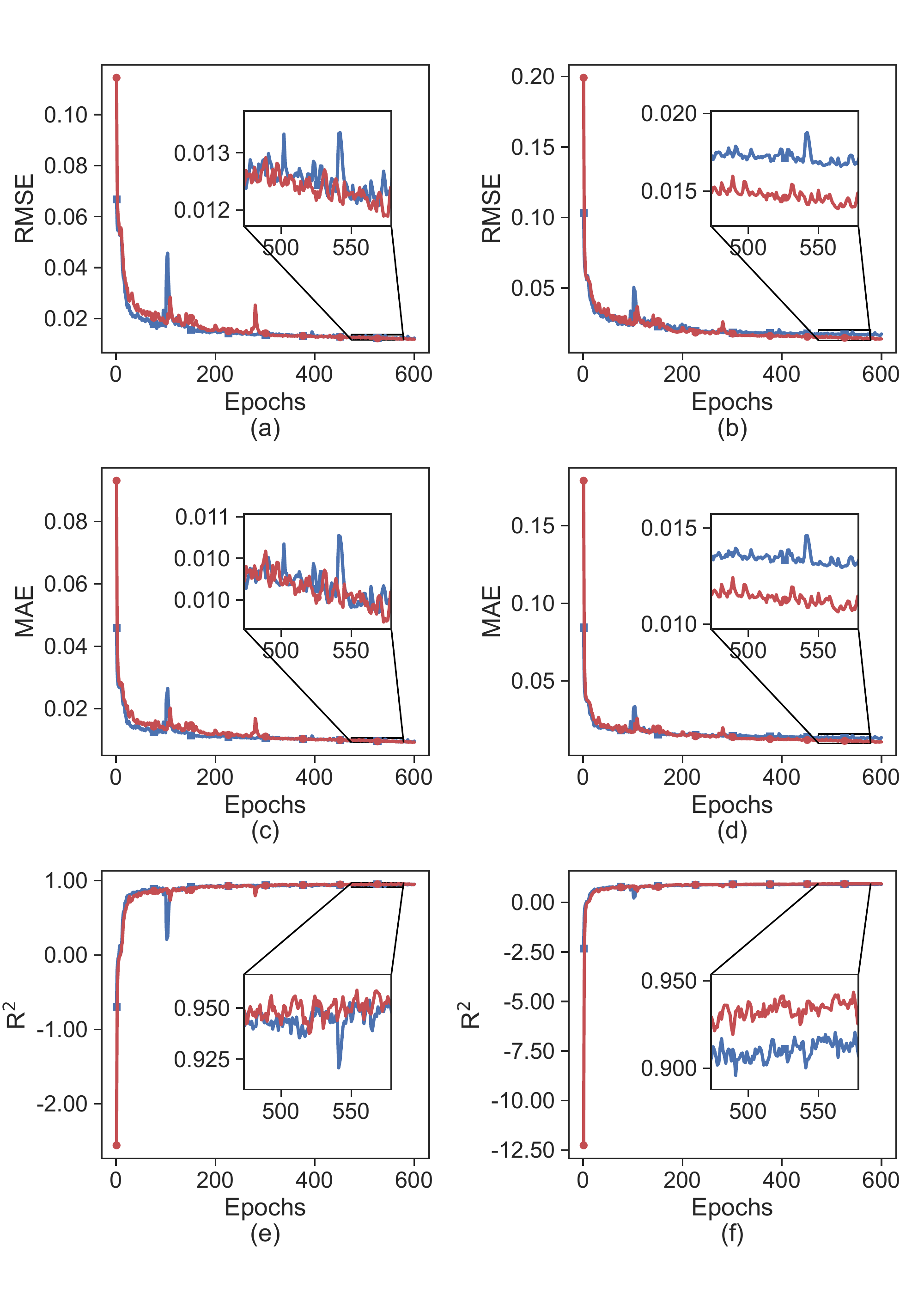}}}
\vskip -0.7cm
\rightline{\subfigure{\includegraphics[width=0.95\columnwidth]{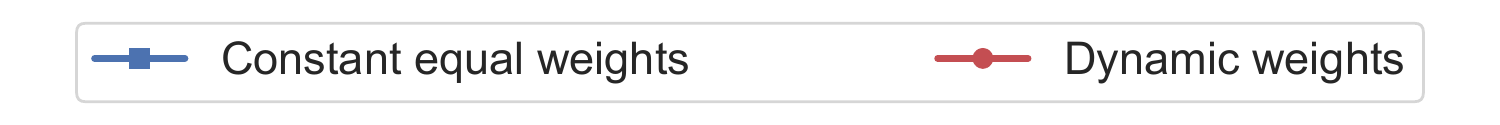}}}
\caption{Performance of dynamic weights and constant equal weights during model training. (a),(c) and (e) are the metrics of $\hhs$ concentration; (b),(d) and (f) are the metrics of $\soo$ concentration.}
\label{fig:metrics-curve}
\vspace{-0.5cm}
\end{figure}

\begin{figure}[t]
\vspace{-0.05cm}
\setlength{\abovecaptionskip}{-0.2cm}
\centering
\includegraphics[width=1.03\columnwidth]{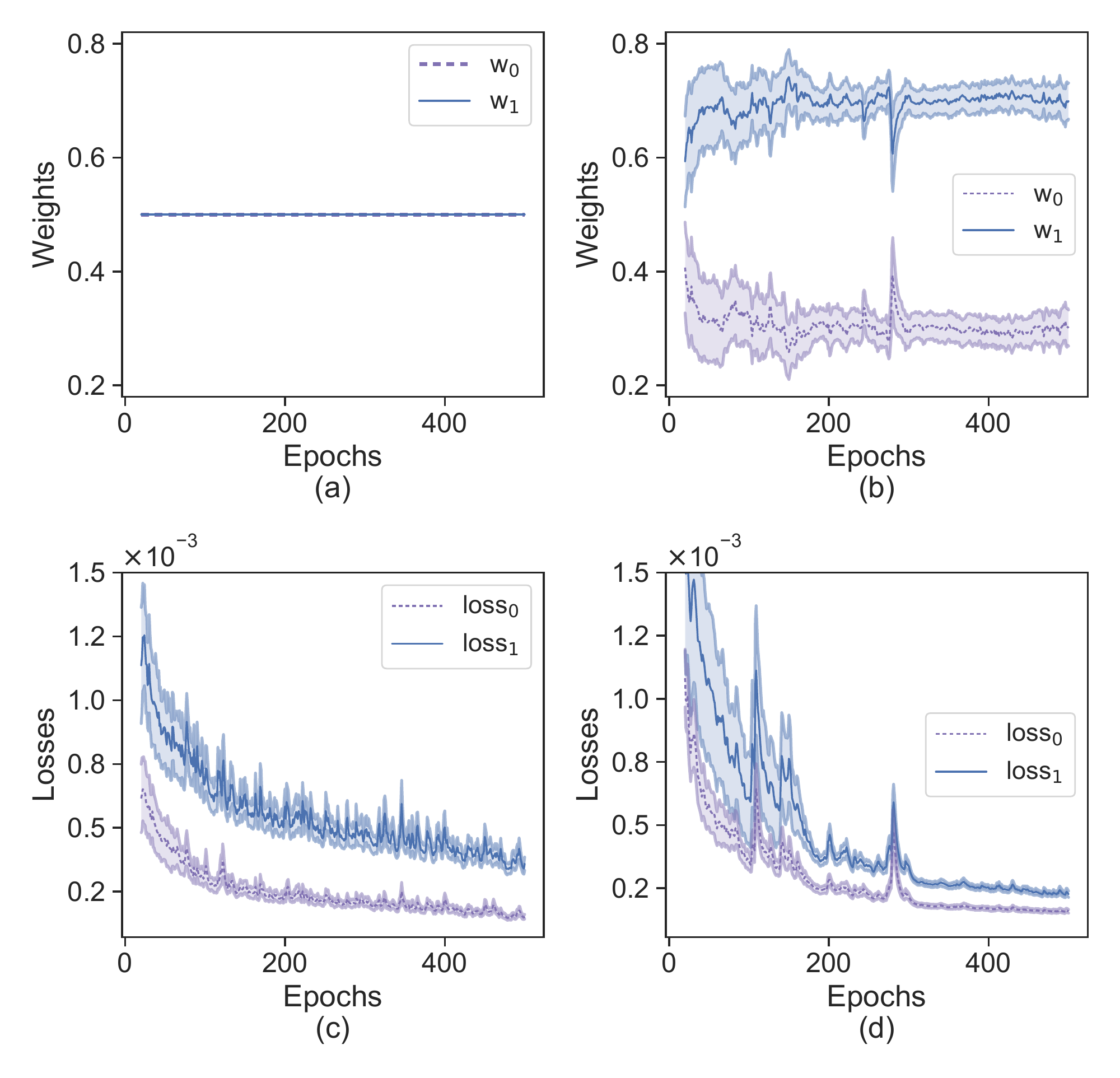}
\caption{Dynamic target weights and corresponding losses during training process of TMoE-P.}
\label{fig:dynamic-weights}
\end{figure}

\begin{figure}[t]
\vspace{-0.2cm}
\setlength{\abovecaptionskip}{-0.15cm}
\centering
\includegraphics[width=1.03\columnwidth]{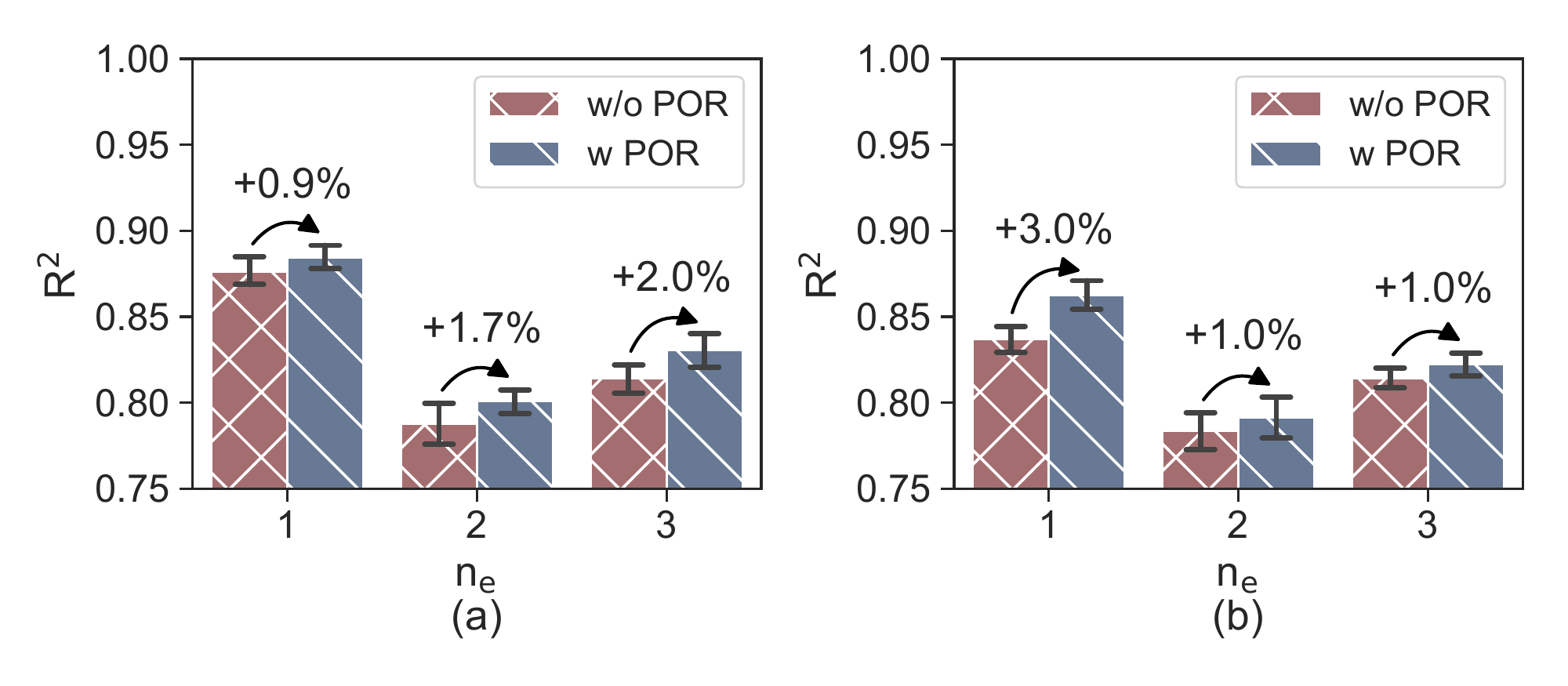}
\caption{Ablation study of POR module under different $n_{e}$. (a) for $\hhs$, and (b) for $\soo$.}
\label{fig:abla-por-omoe}
\vspace{-0.5cm}
\end{figure}

\subsection{Ablation Studies}
\subsubsection{On the POR Module}
The introduction of the POR module to dynamically modify the target weights in the updating of the OMoE network parameters is one of the proposed TMoE-P model's primary advances. This subsection will verify the effectiveness of dynamic weights in the POR module by constrasting the OMoE model trained with constant target weights on the same parameters. 

The trend chart of metrics in the $\hhs$ and $\soo$ concentration prediction objectives of the TMoE-P network trained with constant target weights and dynamic weights is shown in Fig.~\ref{fig:metrics-curve}. It is obvious that the model trained with the POR module's dynamic weights outperforms the model trained with constant weights in terms of steady-state metrics values after convergence. In particular, the $\rr$ of $\hhs$ has risen from 0.8764 to 0.8845 while the $\rr$ of $\soo$ has risen from 0.8371 to 0.8626. Furthermore, dynamic weights outperform single-objective modeling by 8.1\% and 0.67\% respectively, while eliminating the seesaw phenomenon and NT phenomenon. To better understand the evolution trend of dynamic weights, we extracted target weights during the TMoE-P training process, as illustrated in Fig.~\ref{fig:dynamic-weights}. The figure shows that as the objective-specific features and objective-shared features are progressively separated, the target weight evolves from equilibrium to focus on $\hhs$ concentration prediction, eventually settling at roughly 0.7:0.3. The dynamic adjustment of target weights allows the TMoE-P to overcome the imbalance of training resources produced by excessively complicated objective correlations, bringing it closer to the Pareto optimal model, demonstrating the effectiveness of the POR module in MOO.

We further verified the improvement of POR module to OMoE module under different number of experts $n_{e}$, as shown in Fig.~\ref{fig:abla-por-omoe}. It can be seen from the step value in the histogram that, with the same number of experts, the proposed TMoE-P has a higher $\rr$ than OMoE alone, and the variance is generally lower, ensuring more stable and accurate predictions.

In addition, we applied the POR module to the traditional soft sensor models VA-LSTM, SLSTM and the multi-task learning model MMoE, and the results are shown in Fig.~\ref{fig:abla-por-models}. It should be noted that the shared parameter of the first two models is the subject feature extraction network, and the objective-specific parameter is the prediction head. And it can be clearly seen that POR module can produce effective accuracy improvement no matter it is in the traditional soft sensor model or in the multi-task learning model.

\begin{figure}[t]
\vspace{-0.15cm}
\setlength{\abovecaptionskip}{-0.15cm}
\centering
\includegraphics[width=1.03\columnwidth]{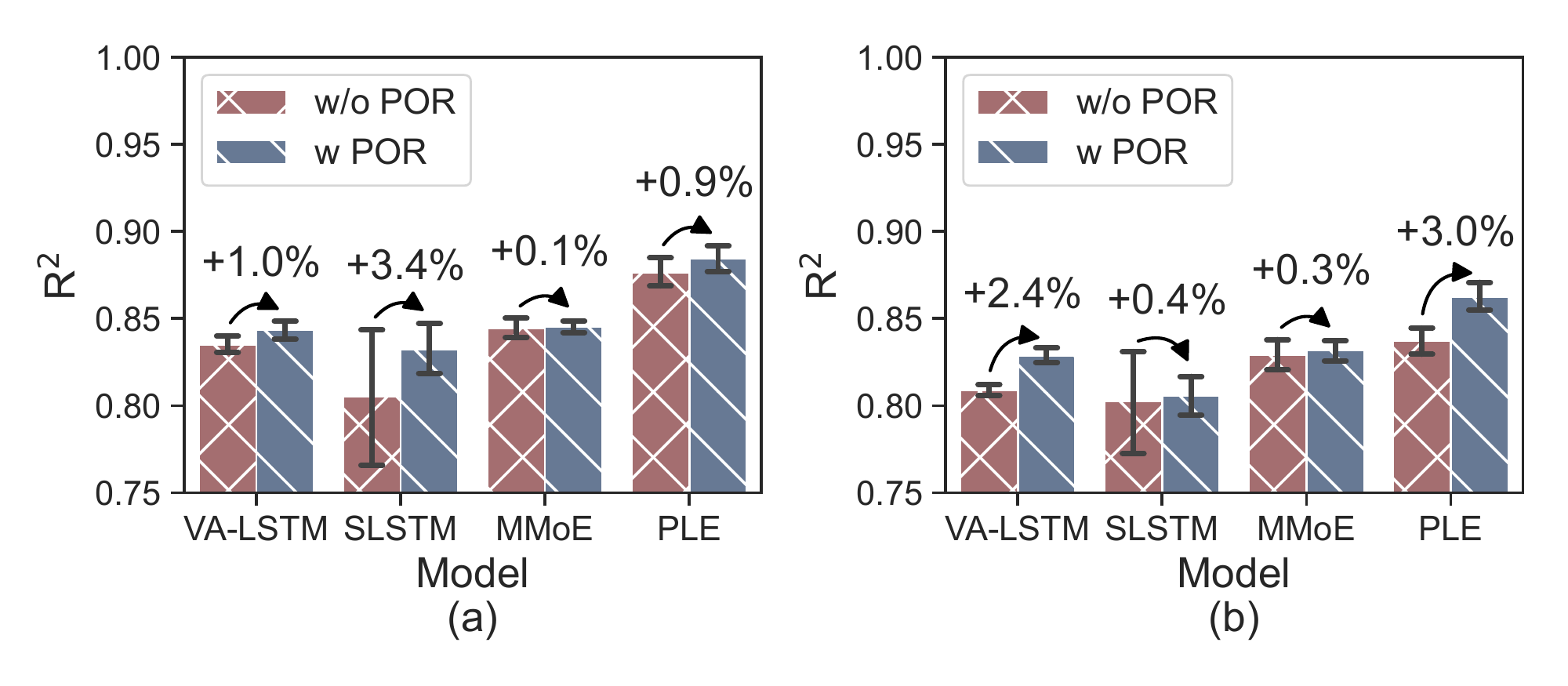}
\caption{Ablation study of POR module under different model. (a) for $\hhs$, and (b) for $\soo$.}
\label{fig:abla-por-models}
\end{figure}

\subsubsection{On the Experts Module}
In Table~\ref{tab:abla-nk-ns}, we remove objective-shared and objective-specific components from the OMoE module to study the differences between target-alone and hard parameter sharing modeling versus hybrid modeling. We find that hybrid modeling, which combines objective-shared and objective-specific experts, is the best MOO framework across all parameter groups. OMoE without objective-shared experts fully ignores the correlation information between quality variables, making it challenging to improve the multi-objective metrics by increasing the number of experts and executing information fusion through the gating network. And regrettably, the $\rr$ metric fell by 2\% $\sim$ 6\% for predictions of two quality variables. The OMoE structure without objective-specific experts, on the other hand, is analogous to the MMoE structure, which cannot escape the seesaw phenomenon in MOO.

\begin{table}[!t]
\vspace{-0.2cm}
\caption{Ablation study of components in TMoE-P}
\centering
\tiny
\setlength{\tabcolsep}{3mm}{
\resizebox{\columnwidth}{!}{
\begin{tabular}{cccc}
    \toprule
    $n_{k}$ & $n_{s}$ & $\hhs-\rr$        & $\soo-\rr$        \\ \midrule
    1  & 0  & 0.8764$\pm$0.0258 & 0.8379$\pm$0.0276 \\
    0  & 1  & 0.8646$\pm$0.0165 & 0.8462$\pm$0.0259 \\
    1  & 1  & \textbf{0.8845$\pm$0.0206} & \textbf{0.8626$\pm$0.0243} \\ \midrule
    2  & 0  & 0.7580$\pm$0.0304 & 0.7833$\pm$0.0245 \\
    0  & 2  & 0.7526$\pm$0.0388 & 0.7841$\pm$0.0218 \\
    2  & 2  & 0.8009$\pm$0.0210 & 0.7914$\pm$0.0358 \\ \midrule
    3  & 0  & 0.8256$\pm$0.0171 & 0.8143$\pm$0.0176 \\
    0  & 3  & 0.8119$\pm$0.0217 & 0.8123$\pm$0.0222 \\
    3  & 3  & 0.8306$\pm$0.0312 & 0.8222$\pm$0.0206 \\ \midrule
    4  & 0  & 0.7622$\pm$0.0378 & 0.7854$\pm$0.0157 \\
    0  & 4  & 0.7954$\pm$0.0273 & 0.7896$\pm$0.0252 \\
    4  & 4  & 0.8045$\pm$0.0418 & 0.7884$\pm$0.0288 \\
    \bottomrule
\end{tabular}}}
\label{tab:abla-nk-ns}
\vspace{-0.4cm}
\end{table}

\begin{figure}[t]
\vspace{-0.5cm}
\setlength{\abovecaptionskip}{-0.2cm}
\centering
\includegraphics[width=1.03\columnwidth]{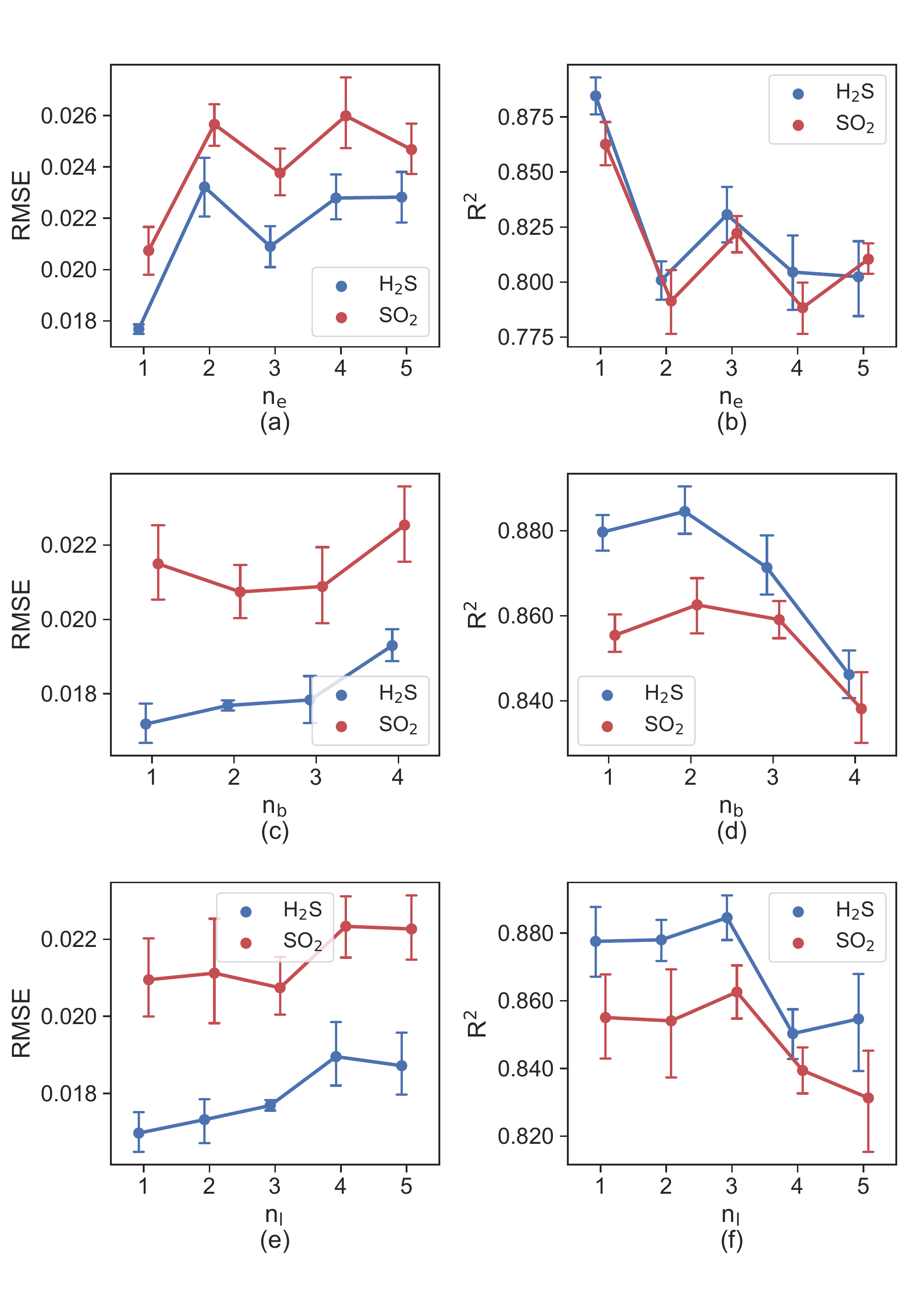}
\caption{Parameter sensitivity analysis of the hyperparameters on metrics RMSE and $\rr$. (a),(c) and (e) for RMSE metric; (b),(d) and (f) for $\rr$ metric.}
\label{fig:sensitivity}
\vspace{-0.4cm}
\end{figure}

\subsection{Parameter Sensitivity Studies}
The main hyperparameters of the proposed TMoE-P model are the number of objective-specific experts $n_{e}$, the number of feature extraction blocks $n_{b}$, and the number of expert network layers $n_{l}$. The sensitivity analysis of the TMoE-P prediction performance will be performed on the three variable hyperparameters in this subsection, with the remaining parameters fixed and one of the above hyperparameters changed through the control variable approach.

Fig.~\ref{fig:sensitivity} depicts the fluctuation diagram of the evaluation metrics RMSE and $\rr$ when $n_{e}$, $n_{b}$ and $n_{l}$ vary between $[1,5]$, $[1,4]$ and $[1,5]$. According to the variations in the two metrics, it can be seen that when $n_{e}=1$, the sum of TMoE-P's metrics in the prediction of $\hhs$ and $\soo$ concentrations is the highest, reaching 0.8845 and 0.8626, respectively. However, when $n_{e}$ grows, the performance of TMoE-P falls dramatically. One plausible explanation is that the gating network shown in Fig. 3 is difficult to integrate large-capacity expert knowledge, and the TMoE-P with single expert is adequate to realize joint prediction of $\hhs$ and $\soo$ concentrations.

When it comes to the effect of $n_{b}$, TMoE-P performs best for $n_{b}=2$, and as $n_{b}$ decreases or grows, TMoE-P performs worse and worse. Indeed, when $n_{b}=1$, TMoE-P can only separate and fuse objective-specific and objective-shared features via a objective-specific gating network, and when $n_{b}$ is too large, the network structure becomes too complex for both to avoid the seesaw phenomenon. Additionally, we discovered that when $n_{l}=3$, the objective-specific expert structure of TMoE-P attained highest performance, outperforming the $n_{l}=2$ and $n_{l}=4$ models in terms of $\rr$, which improved by 0.7\% and 4.0\% for $\hhs$ and 1.0\% and 2.8\% for $\soo$, respectively.

To summarize the sensitivity experiment results for the aforementioned hyperparameters, we recommend that during the TMoE-P training process, first set $n_{e}$ to 1 and then change the hyperparameters $n_{b}$ and $n_{l}$ within $[1,3]$ and $[2,4]$.

\section{Related Work}
Solving industrial soft sensor problems based on the concept of MOO is a novel and demanding task, as there are frequently contradicting but closely connected linkages between quality variables. Broadly speaking, current work in this field can be categorised in two groups. The first is to utilize MOO as the model hyperparameter optimizer to find the best model structure. The second instead generalize and solve the soft sensor problem using the mathematical form of MOO. 

He et al.\citep{relate1} optimized the hyperparameters of the four data-driven models including Random Forest, Gradient Boosting Regression, Ridge Regression, and K-Nearest Neighbor using the intelligent evolutionary algorithm NSGA \Rmnum{2}, but did not further integrate the models. Instead, Jin et al.\citep{relate2} simply performed an optimization search of similarity measures for Just-in-time learning models and developed a stacked Just-in-time learning model with a two-layer structure for online estimation of quality variables. While considering the dimensionality of features and the weight of regularization, Ribeiro et al.\citep{relate3} proposed a two-stage MOO to improve the resilience of Partial Least Squares Regression.

Yan et al.\citep{relate4} suggested an end-to-end multi-quality variable correlation model for joint optimization, with the prediction loss of quality variables and the correlation loss of features serving as the optimization objectives in MOO. However, the nonlinear feature transformation considered in this model is merely Multilayer Perceptron, which is a standard hard parameter sharing model. Huang et al.\citep{bmoe} characterized the soft sensor problem as a multi-task regression problem, which is a special case of MOO, and used the MMoE structure and GradNorm module to balance the task gradient to improve multi-task performance. Futhermore, Lei et al.\citep{relate5} extended the regression task with generation and classification tasks, and employed the upgraded Capsule Network and weighted objective loss for stochastic gradient multi-task optimization.

Our approach is most similar to Huang, in that we both use gradients to create target weights to optimize network parameters. The distinction is that we concentrate primarily on three aspects: we begin by solving the MOP. Compared with multi-task learning, MOO focuses on approximating the Pareto optimal solution, and the network update mechanism is more challenging. Second, the OMoE and MMoE structures use different soft parameter sharing approaches, adding objective-specific experts, which can mitigate the seesaw phenomenon produced by full sharing of the underlying parameters when objective correlation is weak. Finally, our POR optimization algorithm approaches the Pareto optimal model parameters, whereas GradNorm lacks theoretical backing and relies on intuition to discover target weights that balance multi-task learning speed.

\section{Conclusion}
In this paper, we propose a Task-aware Mixture-of-Experts model combining the OMoE and the POR module capable of approaching Pareto optimality (TMoE-P) for the field of industrial soft sensor. The proposed TMoE-P uses the OMoE network to explicitly separate objective-specific parameters from objective-shared parameters in order to achieve soft sensor accuracy that exceeds individual modeling under shared mode. Simultaneously, we employ the POR module, which can approach the Pareto optimality, to dynamically modify the proportion of soft sensor objectives during the training of the OMoE network, avoiding the phenomenon of negative transfer and seesaw. The experimental results on the SRU chemical industry data set reveal that TMoE-P outperforms the SOTA MTL model and the convensional regression prediction model. Our future research will mainly focus on investigating the fusion mechanism of the gating network with a large number of experts, MTL modeling for time-series data in the soft sensor field, and the Pareto stationary point solving algorithm under the MOP optimization framework.


{\appendix[Baseline Description]
The baseline model we used in the verification experiments of TMoE are as follows:
\begin{itemize}
\item{\textbf{PLSR} finds a linear regression model by projecting quality variables and process variables into a new space;}
\item{\textbf{AE} performs representation learning on the input data to obtain dimensionality reduction features, and predicts quality variables after combining with a fully connected neural network;}
\item{\textbf{SAE} increases the depth of the hidden layer based on AE and improves feature extraction ability through layer-wise unsupervised pre-training;}
\item{\textbf{GSTAE}~\citep{gstae} adds the prediction error of the quality variables to the loss function during the layer-wise unsupervised pre-training, guiding the feature learning process with target-related information, and uses gated neurons to learn the information flow of the final output neurons;}
\item{\textbf{LSTM} is a recurrent neural network capable of processing quality variable data arranged in sequences and capturing temporal dependencies between sequences;}
\item{\textbf{VA-LSTM}~\citep{va-lstm} consists of two LSTM networks, one considers the correlation of process variables with quality variables and assigns attention weights to process variables based on the correlations, and the other captures the long-term dependencies of weighted input to predict quality variables;}
\item{\textbf{SLSTM}~\citep{slstm} consists of two LSTM networks, one considers the correlation of process variables with quality variables and assigns attention weights to process variables based on the correlations, and the other captures the long-term dependencies of weighted input to predict quality variables;}
\item{\textbf{MMoE}~\citep{mmoe} is multi-gate mixture-of-experts, which characterizes task correlation and learns specific tasks based on shared representations;}
\item{\textbf{BMoE}~\citep{bmoe} leverages GradNorm on the basis of MMoE to balance the learning gradient between tasks;}
\item{\textbf{PLE}~\citep{ple} explicitly distinguishes between task-specific and task-shared components in the model, avoiding the seesaw phenomenon caused by loose task correlation;}
\end{itemize}

\section*{Proof of Convergence Theorem}
The proof of the convergence of the primal error depends on the following Lemma~\ref{lem:iter-inequality} on the improvement in each iteration.

\begin{lem}\label{lem:iter-inequality}
For a step $\boldsymbol{w}^{(r+1)}=\boldsymbol{w}^{(r)}+\gamma(\boldsymbol{s}-\boldsymbol{w}^{(r)})$ with arbitrary step-size $\gamma \in [0,1]$, if $\boldsymbol{s}$ is a appropriate descent direction on the linear approximation to $L$, it holds that
\begin{equation}
    L\left(\boldsymbol{w}^{(r+1)}\right) \leq L\left(\boldsymbol{w}^{(r)}\right)-\gamma \phi\left(\boldsymbol{w}^{(r)}\right)+\frac{\gamma^{2}}{2} C_{f}(1+\delta).
\end{equation}
\end{lem}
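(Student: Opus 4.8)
The plan is to derive the one-step inequality directly from the definition of the curvature constant $C_{f}$, and then to fold in the approximation accuracy $\delta$ by way of the linearized subproblem and the surrogate duality gap. First I would instantiate the definition of $C_{f}$ at the particular iterate pair $\boldsymbol{w}=\boldsymbol{w}^{(r)}$ and $\boldsymbol{w}'=\boldsymbol{w}^{(r+1)}=\boldsymbol{w}^{(r)}+\gamma(\boldsymbol{s}-\boldsymbol{w}^{(r)})$. Since $C_{f}$ is a supremum over all feasible $\boldsymbol{s}$ and all $\gamma\in[0,1]$, the defining quantity evaluated at this specific pair is at most $C_{f}$, and using $\boldsymbol{w}'-\boldsymbol{w}^{(r)}=\gamma(\boldsymbol{s}-\boldsymbol{w}^{(r)})$ a rearrangement yields the quadratic surrogate bound
\[
L(\boldsymbol{w}^{(r+1)})\leq L(\boldsymbol{w}^{(r)})+\gamma\langle\boldsymbol{s}-\boldsymbol{w}^{(r)},\nabla L(\boldsymbol{w}^{(r)})\rangle+\frac{\gamma^{2}}{2}C_{f}.
\]

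Next I would control the linear term $\langle\boldsymbol{s}-\boldsymbol{w}^{(r)},\nabla L(\boldsymbol{w}^{(r)})\rangle$. The hypothesis that $\boldsymbol{s}$ is an appropriate descent direction is made precise as the statement that $\boldsymbol{s}$ solves the linearized subproblem~\eqref{eq:linear-opt} to additive accuracy $\tfrac{1}{2}\delta\gamma C_{f}$, that is $\langle\boldsymbol{s},\nabla L(\boldsymbol{w}^{(r)})\rangle\leq\min_{\hat{\boldsymbol{s}}}\langle\hat{\boldsymbol{s}},\nabla L(\boldsymbol{w}^{(r)})\rangle+\tfrac{1}{2}\delta\gamma C_{f}$. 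Writing $\boldsymbol{s}^{\star}$ for the exact minimizer of the linearization, the definition of the surrogate duality gap gives $\langle\boldsymbol{w}^{(r)}-\boldsymbol{s}^{\star},\nabla L(\boldsymbol{w}^{(r)})\rangle=\phi(\boldsymbol{w}^{(r)})$, so the approximation condition converts into
\[
\langle\boldsymbol{s}-\boldsymbol{w}^{(r)},\nabla L(\boldsymbol{w}^{(r)})\rangle\leq-\phi(\boldsymbol{w}^{(r)})+\tfrac{1}{2}\delta\gamma C_{f}.
\]

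Finally I would substitute this bound on the linear term into the quadratic surrogate bound and collect the two contributions proportional to $\gamma^{2}C_{f}$: the cross term $\gamma\cdot\tfrac{1}{2}\delta\gamma C_{f}=\tfrac{\gamma^{2}}{2}\delta C_{f}$ combines with the existing $\tfrac{\gamma^{2}}{2}C_{f}$ to produce exactly $\tfrac{\gamma^{2}}{2}C_{f}(1+\delta)$, while the $-\gamma\phi(\boldsymbol{w}^{(r)})$ term is carried through unchanged, giving the claimed inequality.

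The hard part will be making the ``appropriate descent direction'' hypothesis quantitatively precise: pinning down the exact form of the $\delta$-approximate subproblem guarantee so that the extra slack contributes precisely $\tfrac{\gamma^{2}}{2}\delta C_{f}$ and matches the stated constant, rather than some other scaling of $\delta$. Once that condition is fixed and $\phi$ is identified with the gap at the exact linear minimizer, the remainder is routine algebraic rearrangement of the curvature definition, and convexity of $L$ is used only to ensure $\phi(\boldsymbol{w}^{(r)})\geq 0$ so that the bound is genuinely a descent guarantee.
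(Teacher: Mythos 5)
Your proposal is correct and follows essentially the same route as the paper's own proof: instantiate the curvature definition at $\boldsymbol{w}'=\boldsymbol{w}^{(r)}+\gamma(\boldsymbol{s}-\boldsymbol{w}^{(r)})$, formalize the ``appropriate descent direction'' hypothesis as the additive $\tfrac{1}{2}\delta\gamma C_{f}$-approximate solution of the linearized subproblem, convert that via the duality gap into $\langle\boldsymbol{s}-\boldsymbol{w}^{(r)},\nabla L(\boldsymbol{w}^{(r)})\rangle\leq-\phi(\boldsymbol{w}^{(r)})+\tfrac{1}{2}\delta\gamma C_{f}$, and substitute. You even pinned down the exact scaling of the $\delta$-slack that the paper uses, so the constants match without further adjustment.
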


\begin{proof}
To simplify the notation, we write $\boldsymbol{w}=\boldsymbol{w}^{(r)}$, $\boldsymbol{w}'=\boldsymbol{w}^{(r+1)}$, and $d_{w}=\nabla L\left(\boldsymbol{w}^{(r)}\right)$. From the definition of the curvature constant $C_f$, we have
\begin{equation}
\begin{aligned}
    L(\boldsymbol{w}') & =L(\boldsymbol{w}+\gamma(\boldsymbol{s}-\boldsymbol{w})) \\
    & \leq L(\boldsymbol{w})+\gamma\left\langle\boldsymbol{s}-\boldsymbol{w}, d_{w}\right\rangle+\frac{\gamma^{2}}{2} C_{f}.
\end{aligned}
\end{equation}

Then we pick $\boldsymbol{s}$ as an linear minimizer for $L$, and it satisfies $\left\langle\boldsymbol{s}, \nabla L\left(\boldsymbol{w}^{(r)}\right)\right\rangle \leq \min \left\langle\hat{\boldsymbol{s}}, \nabla L\left(\boldsymbol{w}^{(r)}\right)\right\rangle+\frac{1}{2} \delta \gamma C_{f}$, which can be rewritten as
\begin{equation}
\begin{aligned}
    \left\langle \boldsymbol{s}-\boldsymbol{w}, d_{w}\right\rangle & \leq \min \left\langle\boldsymbol{w}', d_{w}\right\rangle-\left\langle\boldsymbol{w}, d_{w}\right\rangle+\frac{1}{2} \delta \gamma C_{f} \\
    & =-\phi(\boldsymbol{w})+\frac{1}{2} \delta \gamma C_{f}.
\end{aligned}
\end{equation}

Therefore, we obtain $L(\boldsymbol{w}') \leq L(\boldsymbol{w})-\gamma \phi(\boldsymbol{w})+\frac{\gamma^{2}}{2} C_{f}(1+\delta)$, which proves the lemma.
\end{proof}

\begin{thm}\label{thm:primal-conv-apdx}
For each $r \ge 1$, the iterates $\boldsymbol{w}^{(r)}$ of the Frank-Wolfe algorithm satisfy
\begin{equation}
    L(\boldsymbol{w}^{(r)})-L(\boldsymbol{w}^{*}) \leq \frac{2 C_{f}}{r+2}(1+\delta).
\end{equation}
where $\delta \ge 0$ is the accuracy to which the linear subproblems~\eqref{eq:linear-opt} are solved, and $C_{f}$ is the curvature constant of loss function $L$ which is defined as
\begin{equation}
    C_{f}=\sup_{\gamma \in[0,1]} \frac{2}{\gamma^{2}}(L(\boldsymbol{w}')-L(\boldsymbol{w})-\langle\boldsymbol{w}'-\boldsymbol{w}, \nabla L(\boldsymbol{w})\rangle).
\end{equation}
where $\boldsymbol{w}'=\boldsymbol{w}+\gamma(\boldsymbol{s}-\boldsymbol{w})$ and $\boldsymbol{s}$ is a feasible solution.
\end{thm}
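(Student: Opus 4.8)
The plan is to combine the per-iteration improvement in Lemma~\ref{lem:iter-inequality} with the duality-gap lower bound $\phi(\boldsymbol{w}) \ge L(\boldsymbol{w})-L(\boldsymbol{w}^{*})$ (established just before the theorem from convexity of $L$) to obtain a self-referential contraction on the primal error, and then close it by induction with the canonical Frank--Wolfe step size. Write $h_{r} = L(\boldsymbol{w}^{(r)})-L(\boldsymbol{w}^{*})$ for the primal error and abbreviate $C = C_{f}(1+\delta)$. The first step is to insert $\phi(\boldsymbol{w}^{(r)}) \ge h_{r}$ into the bound of Lemma~\ref{lem:iter-inequality}, which turns the gap term into the primal error itself and yields, for any $\gamma \in [0,1]$,
\begin{equation}
    h_{r+1} \le (1-\gamma)\,h_{r} + \frac{\gamma^{2}}{2}\,C.
\end{equation}
I would note here that although Algorithm~1 selects the step by exact line search over $v$, the optimal step can only decrease the objective further, so the inequality above continues to hold when we substitute the specific analyzable choice $\gamma_{r} = \tfrac{2}{r+2}$.

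Next I would run the induction on the claim $h_{r} \le \tfrac{2C}{r+2}$ for $r \ge 1$. For the base case, taking $\gamma_{0} = \tfrac{2}{0+2} = 1$ in the recurrence collapses the $(1-\gamma)h_{0}$ term and gives $h_{1} \le \tfrac{1}{2}C \le \tfrac{2C}{3}$, which matches the target at $r=1$. For the inductive step, assume $h_{r} \le \tfrac{2C}{r+2}$ and substitute $\gamma_{r} = \tfrac{2}{r+2}$:
\begin{equation}
    h_{r+1} \le \left(1-\frac{2}{r+2}\right)\frac{2C}{r+2} + \frac{1}{2}\cdot\frac{4}{(r+2)^{2}}\,C = \frac{2C(r+1)}{(r+2)^{2}}.
\end{equation}
The induction then closes via the elementary inequality $(r+1)(r+3) \le (r+2)^{2}$, equivalently $r^{2}+4r+3 \le r^{2}+4r+4$, which gives $h_{r+1} \le \tfrac{2C}{r+3}$ and advances the hypothesis. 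Unfolding $C = C_{f}(1+\delta)$ recovers exactly the stated bound.

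The genuinely important conceptual input is the duality-gap inequality $\phi \ge h$, since it is what converts the linear-subproblem progress quantified in Lemma~\ref{lem:iter-inequality} into a contraction phrased purely in the primal error; without it the recurrence would still involve the unknown $\phi$. The remaining work—choosing $\gamma_{r} = \tfrac{2}{r+2}$, handling the base case through $\gamma_{0}=1$, and verifying the terminal algebraic inequality—is routine, so I expect the main obstacle to be bookkeeping rather than a deep estimate: one must be careful that the approximate-minimizer slack $\tfrac{1}{2}\delta\gamma C_{f}$ in the linear subproblem has already been absorbed into the factor $(1+\delta)$ by Lemma~\ref{lem:iter-inequality}, so that no additional error accumulates across iterations.
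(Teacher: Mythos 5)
Your proposal is correct and follows essentially the same route as the paper's proof: both apply Lemma~\ref{lem:iter-inequality} with the weak-duality bound $\phi(\boldsymbol{w}) \ge L(\boldsymbol{w})-L(\boldsymbol{w}^{*})$ to obtain the recurrence $h_{r+1} \le (1-\gamma)h_{r}+\frac{\gamma^{2}}{2}C_{f}(1+\delta)$, fix $\gamma_{r}=\frac{2}{r+2}$, and close the induction (base case via $\gamma_{0}=1$) with the inequality $(r+1)(r+3)\le(r+2)^{2}$, differing only in the normalization of the constant $C$ and the indexing of the induction claim. Your added remark that an exact line search can only improve on the fixed step size is a point the paper leaves implicit, but it does not change the argument.
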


\begin{proof}
From Lemma~\ref{lem:iter-inequality} we know that $L\left(\boldsymbol{w}^{(r+1)}\right) \leq L\left(\boldsymbol{w}^{(r)}\right)-\gamma \phi\left(\boldsymbol{w}^{(r)}\right)+\gamma^{2}C$ holds for every step of Frank-Wolfe algorithm where we define $C=\frac{C_{f}}{2}(1+\delta)$ and take fixed step $\gamma=\frac{2}{r+2}$. 

Writing the primal error as $\zeta(\boldsymbol{w})=L(\boldsymbol{w})-L(\boldsymbol{w}^{*})$ at any point $\boldsymbol{w}$, this implies that
\begin{equation}\label{eq:primal-error-inequality}
\begin{aligned}
    \zeta\left(\boldsymbol{w}^{(r+1)}\right) & \leq \zeta\left(\boldsymbol{w}^{(r)}\right)-\gamma \phi\left(\boldsymbol{w}^{(r)}\right)+\gamma^{2} C \\
    & \leq \zeta\left(\boldsymbol{w}^{(r)}\right)-\gamma \zeta\left(\boldsymbol{w}^{(r)}\right)+\gamma^{2} C \\
    & =(1-\gamma) \zeta\left(\boldsymbol{w}^{(r)}\right)+\gamma^{2} C.
\end{aligned}
\end{equation}
where we have used weak duality $\zeta(\boldsymbol{w}) \leq \phi(\boldsymbol{w})$. We will now use induction over $r$ to prove bound claimed as follows
\begin{equation}
    \zeta\left(\boldsymbol{w}^{(r+1)}\right) \leq \frac{4 C}{r+3} \quad r=0,1, \ldots
\end{equation}

The base-case $r = 0$ follows from~\eqref{eq:primal-error-inequality} applied for the first step through $\gamma=\gamma^{(0)}=\frac{2}{0+2}=1$. Now considering $r \ge 1$,
\begin{equation}
\begin{aligned}
    \zeta\left(\boldsymbol{w}^{(r+1)}\right) & \leq (1-\gamma^{(r)}) \zeta\left(\boldsymbol{w}^{(r)}\right)+{\gamma^{(r)}}^{2} C \\
    & = (1-\frac{2}{r+2}) \zeta\left(\boldsymbol{w}^{(r)}\right)+{\frac{2}{r+2}}^{2} C \\
    & \leq (1-\frac{2}{r+2}) \frac{4C}{r+2}+{\frac{2}{r+2}}^{2} C .
\end{aligned}
\end{equation}

Simply rearranging the terms gives the bound claimed above
\begin{equation}
\begin{aligned}
    \zeta\left(\boldsymbol{w}^{(r+1)}\right) & \leq \frac{4C}{r+2}(1-\frac{1}{r+2})=\frac{4C}{r+2} \frac{r+1}{r+2} \\
    & \leq \frac{4C}{r+2} \frac{r+2}{r+3} = \frac{4C}{r+3} .
\end{aligned}
\end{equation}
\end{proof}

\begin{thm}
If the Frank-Wolfe algorithm is run for $R \ge 2$ iterations, then the algorithm has a bounded duality gap with iterate $\boldsymbol{w}^{(r)}$, $1 \le r \le R$
\begin{equation}
    \phi(\boldsymbol{w}^{(r)}) \leq \frac{2 \beta C_{f}}{R+2}(1+\delta).
\end{equation}
where $\beta = \frac{27}{8}$.
\end{thm}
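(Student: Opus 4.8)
The plan is to argue by contradiction, showing that the surrogate duality gap $\phi(\boldsymbol{w}^{(r)})$ cannot stay above the claimed threshold across an entire block of consecutive iterations. Throughout I write $C=\tfrac{C_f}{2}(1+\delta)$, $\gamma_r=\tfrac{2}{r+2}$, and $\zeta(\boldsymbol{w})=L(\boldsymbol{w})-L(\boldsymbol{w}^{*})$, and I would lean on three ingredients already in hand: the per-step improvement of Lemma~\ref{lem:iter-inequality}, which under the fixed step size reads $\zeta(\boldsymbol{w}^{(r+1)})\le \zeta(\boldsymbol{w}^{(r)})-\gamma_r\,\phi(\boldsymbol{w}^{(r)})+\gamma_r^{2}C$; the primal bound of Theorem~\ref{thm:primal-conv}, $\zeta(\boldsymbol{w}^{(r)})\le \tfrac{4C}{r+2}$; and weak duality $\zeta(\boldsymbol{w})\le\phi(\boldsymbol{w})$, which holds because $L(\boldsymbol{w})=\boldsymbol{w}^{\top}\boldsymbol{M}\boldsymbol{w}$ is convex (its Hessian is the Gram matrix $\boldsymbol{M}\succeq 0$).

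First I would rearrange the improvement inequality into $\gamma_r\,\phi(\boldsymbol{w}^{(r)})\le \zeta(\boldsymbol{w}^{(r)})-\zeta(\boldsymbol{w}^{(r+1)})+\gamma_r^{2}C$, exposing that a large gap forces a correspondingly large one-step decrease of the primal error. Next I would fix a suffix window, e.g.\ $\lceil R/2\rceil\le r\le R$, and assume for contradiction that $\phi(\boldsymbol{w}^{(r)})>\tfrac{2\beta C_f}{R+2}(1+\delta)=\tfrac{4\beta C}{R+2}$ for every $r$ in that window. Summing the rearranged inequality over the window makes the primal-error terms telescope to $\zeta(\boldsymbol{w}^{(\lceil R/2\rceil)})-\zeta(\boldsymbol{w}^{(R+1)})\le \zeta(\boldsymbol{w}^{(\lceil R/2\rceil)})$, since $\zeta\ge 0$; bounding this head term by Theorem~\ref{thm:primal-conv} and collecting the curvature terms yields the necessary consequence
\begin{equation}
\frac{4\beta C}{R+2}\sum_{r=\lceil R/2\rceil}^{R}\gamma_r \;<\; \frac{4C}{\lceil R/2\rceil+2}+C\sum_{r=\lceil R/2\rceil}^{R}\gamma_r^{2}.
\end{equation}

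The remaining work is then purely numeric: lower-bound the harmonic-type sum $\sum\gamma_r=2\sum\tfrac{1}{r+2}$ over the window, upper-bound $\sum\gamma_r^{2}=4\sum\tfrac{1}{(r+2)^2}$, and control the head term $\tfrac{4C}{\lceil R/2\rceil+2}$ via standard integral comparisons, then show that with $\beta=\tfrac{27}{8}$ the above strict inequality cannot hold — the contradiction. Consequently at least one iterate in the window, namely the minimizer of $\phi$, satisfies the stated bound. I expect the main obstacle to be exactly this last accounting step: crude asymptotic integral estimates only force $\beta$ to a constant of order two to three, so pinning down the sharp value $\beta=\tfrac{27}{8}$ demands non-asymptotic (finite-$R$) bounds on the two sums together with a careful choice of the window's left endpoint, with the worst case driven by small $R$. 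A secondary point to handle cleanly is the hypothesis $R\ge 2$ for the boundary cases and the fact that the guarantee is an \emph{existence} statement over the window, i.e.\ it bounds $\min_{r}\phi(\boldsymbol{w}^{(r)})$ rather than every individual iterate.
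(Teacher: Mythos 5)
Your proposal is correct and follows the same overall strategy as the paper's proof: assume the surrogate gap exceeds the threshold on a trailing block of iterations, sum the per-step inequality of Lemma~\ref{lem:iter-inequality} with $\gamma_r=\tfrac{2}{r+2}$, control the head term via Theorem~\ref{thm:primal-conv} and weak duality, and conclude $\zeta(\boldsymbol{w}^{(R+1)})<0$, a contradiction; like yours, the paper's result is an existence statement over the window (it proves the bound for some iterate in the last third of the run). The differences are in the bookkeeping, and spelling them out resolves the worry you raise at the end. The paper never keeps the sums $\sum_r\gamma_r$ and $\sum_r\gamma_r^2$: on a window beginning at $\lceil\mu U\rceil-2$ with $U=R+2$, it bounds every term uniformly ($\gamma_r\ge\tfrac{2}{U}$ and $\gamma_r^2\le\tfrac{4}{\mu^2U^2}$), so the contradiction requires $(1-\mu)\left(2\mu\beta-\tfrac{1}{\mu}\right)\ge 1$, i.e.\ $\beta\ge\tfrac{1}{2\mu^2(1-\mu)}$, and choosing the optimal window fraction $\mu=\tfrac{2}{3}$ yields exactly $\beta=\tfrac{27}{8}$. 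So $\tfrac{27}{8}$ is not a sharp constant that your finer estimates must painstakingly reproduce; it is simply the best constant attainable by the paper's crude per-term bounding. Two corrections to your self-assessment follow. First, your half-window $\mu=\tfrac{1}{2}$ combined with those crude uniform bounds would require $\beta\ge 4$, so your plan genuinely needs the integral estimates of the exact sums. Second, once you use them, your requirement on $\beta$ tends to $\tfrac{2-\mu}{2\mu\ln(1/\mu)}=\tfrac{3}{2\ln 2}\approx 2.17$ as $R\to\infty$ and is smaller for small $R$ (about $\tfrac{73}{42}\approx 1.74$ at $R=2$), so the binding case is large $R$, not small $R$, and $\beta=\tfrac{27}{8}$ clears it with ample slack rather than being delicate. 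Your route therefore closes as planned; alternatively, shifting your window to the last third and replacing the integral comparisons by the uniform per-term bounds reproduces the paper's argument essentially verbatim.
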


\begin{proof}
We do not prove that the above duality gap holds for the entire iteration, but for the last third of the $R$ iteration. To simplify notation, we denote the primal and dual errors as $\zeta^{(r)}=\zeta\left(\boldsymbol{w}^{(r)}\right)$ and $\phi^{(r)}=\phi\left(\boldsymbol{w}^{(r)}\right)$ for $r \ge 0$.

By primal convergence Theorem~\ref{thm:primal-conv-apdx}, we already know that the primal error satisfies $\zeta^{(r)} \leq \frac{C}{r+2}$, where we reset $C=2C_f (1+\delta)$.

We firstly make following contradict assumption: $\phi^{(r)}$ always stays larger than $\frac{\beta C}{U}$ in the last third of the $R$ iterations, which can be formally defined as
\begin{equation}\label{eq:contradict-assumption}
    \phi^{(r)}>\frac{\beta C}{U} \quad \text { for } \quad r \in\{\lceil\mu U\rceil-2, \ldots, R\}
\end{equation}
where $U=R+2$ for simple notation, and $0 < \mu < 1$ is an arbitrary fixed parameter. We will find later that $\mu=\frac{2}{3}$ is a good choice.

Lemma~\ref{lem:iter-inequality} can be further read as follows if we choose $\gamma=\frac{2}{r+2}$
\begin{equation}
\begin{aligned}
    \zeta^{(r+1)} & \leq \zeta^{(r)}-\frac{2}{r+2} \phi^{(r)}+\frac{2}{(r+2)^{2}} C_{f}(1+\delta) \\
    & =\zeta^{(r)}-\frac{2}{r+2} \phi^{(r)}+\frac{C}{(r+2)^{2}}.
\end{aligned}
\end{equation}

Considering the assumptions we made in~\eqref{eq:contradict-assumption}, we obtain
\begin{equation}\label{eq:zeta-inequality-1}
    \zeta^{(r+1)} < \zeta^{(r)}-\frac{2}{r+2} \frac{\beta C}{U}+\frac{C}{(r+2)^{2}}.
\end{equation}

If we define $r_{\text{min}}=\lceil\mu U\rceil-2$, then $r_{\text{min}} \ge 0$ for $R \ge \frac{2(1-\mu)}{\mu}$. While the steps $r$ satisfy $r_{\text{min}} \leq r \leq R$, then $\mu U \leq r+2 \leq U$, the inequality in~\ref{eq:zeta-inequality-1} now reads as
\begin{equation}
\begin{aligned}
    \zeta^{(r+1)} & < \zeta^{(r)}-\frac{2}{U} \frac{\beta C}{U}+\frac{C}{(\mu U)^{2}} \\
    & = \zeta^{(r)}-\frac{2\beta C-C/\mu^{2}}{U^{2}}.
\end{aligned}
\end{equation}

We now sum up this inequality over the last third of the steps from $r_{\text{min}}$ up to $R$, then we get
\begin{align}
\begin{split}
    \zeta^{(R+1)} & < \zeta^{(r_{\text{min}})}-(R-r_{\text{min}}+1)\frac{2\beta C-C/\mu^{2}}{U^{2}} \\
    & \leq \frac{C}{\mu U}-\tau \frac{2\mu\beta-1/\mu}{U} \frac{C}{\mu U} \\
    & = \frac{C}{\mu U} \left( 1-\tau\frac{2\mu\beta-1/\mu}{U} \right).
\end{split}
\end{align}
where $\tau=(1-\mu)U \leq R+2-(\lceil\mu U\rceil-1)=R-r_{\text{min}}+1$, and in the last inequality we have used Theorem~\ref{thm:primal-conv-apdx} giving $\zeta^{(r_{\text{min}})} \leq \frac{C}{r_{\text{min}}+2} \leq \frac{C}{\mu U}$. For $\mu=\frac{2}{3}$ and $\beta=\frac{27}{8}$, the following term become zero: $1-\tau\frac{2\mu\beta-1/\mu}{U} = 1-(1-\mu)(2\mu\beta-1/\mu)=0$. Then we arrive at the contradiction that $\zeta^{(R+1)}<0$ and our assumption on the gap is refuted, so the claimed bound has been proven.
\end{proof}
}

\bibliography{references}
\bibliographystyle{IEEEtran}



\vfill

\end{document}